\documentclass{svproc}
\usepackage{url}

\usepackage[utf8]{inputenc}
\usepackage{graphicx}
\usepackage{amsmath}
\usepackage{amssymb,xcolor}
\usepackage{mathtools}
\usepackage{enumitem}
\usepackage{makecell}
\usepackage{algorithm}
\usepackage{hyperref}
\usepackage{booktabs}
\usepackage{multirow}
\usepackage{upgreek}
\usepackage[scriptsize]{subfigure}

\def\markatright#1{\leavevmode\unskip\nobreak\quad\hspace*{\fill}{#1}}

\DeclareMathOperator*{\argmin}{arg\,min} 

\newcommand{\ql}{Q-learning}

\def\Re{{\mathbb{R}}}
\def\Nat{{\mathbb{N}}}
\def\plan{{\pi}}
\def\P{{\cal P}}

\def\aqval{\hat{Q}^*}

\begin{document}
\mainmatter              

\title{Relating Reinforcement Learning to \\Dynamic Programming-Based Planning} 

\titlerunning{Relating RL to Planning}  
\authorrunning{Georgiev, Timperi, Sakçak, LaValle}
%
\author{Filip V. Georgiev\inst{1}, Kalle G. Timperi\inst{1}, Başak Sakçak\inst{1,2}, Steven M. LaValle\inst{1}%
\thanks{This work was supported by a European Research Council Advanced Grant (ERC AdG, ILLUSIVE: Foundations of Perception Engineering, 101020977, Academy of Finland BANG! 363637).  {\tt\small (e-mail: firstname.lastname@oulu.fi).}}%
}
\institute{Center for Applied Computing,
Faculty of Information Technology \\and Electrical Engineering, 
University of Oulu, Finland \and 
Dept. of Advanced Computing Sciences, Maastricht University, the Netherlands}

\maketitle              

\begin{abstract}
This paper bridges some of the gap between optimal planning and reinforcement learning (RL), both of which share roots in dynamic programming (DP) applied to sequential decision making or optimal control.  Whereas planning typically favors deterministic models, goal termination, and cost minimization, RL tends to favor stochastic models, infinite-horizon discounting, and reward maximization in addition to learning-related parameters such as the learning rate and greediness factor.  A derandomized version of RL is developed, analyzed, and implemented to yield performance comparisons with value iteration and Dijkstra's algorithm using simple planning models.  Next, mathematical analysis shows: 1) conditions under which cost minimization and reward maximization are equivalent, 2) conditions for equivalence of single-shot goal termination and infinite-horizon episodic learning, and 3) conditions under which discounting causes goal achievement to fail.  The paper then advocates for defining and optimizing {\em truecost}, rather than inserting arbitrary parameters to guide operations.  Performance studies are  
extended to the stochastic case, using planning-oriented criteria and comparing value iteration to RL with learning rates and greediness factors.

\keywords{planning algorithms, motion planning,
reinforcement learning, Q-learning, value iteration, dynamic programming}
\end{abstract}

\section{Introduction}

Bellman introduced the principle of DP in the 1950s as a way to efficiently compute solutions to sequential decision-making problems \cite{Bel57}.  The core idea is the {\em principle of optimality}, a simple observation that optimal solutions must be composed of optimal parts, thereby leading to a powerful recurrence relation or differential equation that underlies the Hamilton-Jacobi-Bellman (HJB) equation in continuous-time optimal control, value and policy iteration methods for discrete-time problems, and is useful for algorithm design and complexity analysis well beyond sequential decision making.  In the context of motion planning, Dijkstra's algorithm, A$^*$-search, D$^*$ \cite{KoeLik02,Ste94}, and many others compute optimal plans over grids or more general graphs.  Dijkstra-like planning algorithms, built upon value iteration, compute approximately optimal feedback plans over continuous spaces using sampling and interpolation of value functions \cite{YerLav11}.
All of these methods extend gracefully to settings with either nondeterministic (possibilistic) or stochastic (probabilistic) prediction uncertainty \cite{Lav06}.

In the 1990s, Barto and Sutton introduced a powerful paradigm that cleverly interleaves the discovery of the effect of actions with the calculations of optimal values and plans \cite{SutBar98}.  In preceding works, experimental data was first used to learn or estimate a model of the form $x'=f(x,u)$ for states $x,x'$ and action $u$ (or $p(x'\;|\;x,u)$ in a stochastic setting), and then the model was used as a module in DP methods.  The new paradigm, called {\em reinforcement learning (RL)}, retains all of the structure of Bellman's original DP equations, while inserting learning or discovery directly into the algorithms for computing optimal plans (or policies).  Thus, RL is a fusion of optimal sequential decision-making (control, planning) with machine learning (system estimation, identification).

In recent years, RL has become ubiquitous as an approach to problems in robotics and embodied AI \cite{KobBagPet13,TanAbbHuChaMarSto25}.  As it is applied to motion planning, it has been  
useful as a way to generate motion/control primitives to complement planners (e.g., \cite{Chi19}); however, the relationships across the Bellman-inspired family of methods have become increasingly opaque.  In non-RL works, a clear cost model is formulated in terms of engineering resources, such as time or energy expended, and plans are selected to globally optimize cumulative cost.  In standard RL, a reward model is defined that is biologically inspired and is often shaped or tuned to yield desired outcomes \cite{NgHarRus99}.  Furthermore, RL tends to be formulated as an infinite-horizon problem in which the most preferred way to make cumulative rewards finite is via an arbitrary discount factor that has little meaning in an engineering context (notable exceptions exist \cite{Nai19}).  In non-RL approaches to planning, the episode usually ends when goals are achieved; even though the number of steps may be unbounded, it is finite and can be efficiently handled by termination actions \cite{Lav06} or terminal states \cite{BerTsi96,Put94}, thereby preserving the true cost model.  Finally, RL is formulated exclusively in a stochastic setting, whereas for non-RL methods there is a simple and clear relationship between corresponding algorithms in deterministic and stochastic settings.  Probabilistic modeling is not the `only' or `most general' way to handle uncertainties (e.g., \cite{Kal94}).

To address these concerns,  
we investigate a series of methods along the spectrum from deterministic planning to stochastic RL (to Q-learning~\cite{WatDay92}). Relationships are established through both theoretical analysis and experimental studies. To this end, this paper is closely related to the previous attempts on unifying RL and DP concepts \cite{Ber24,Ber19}. However, in this work, we focus on the interplay between the cost/reward formulations, analyze the effects of randomization and other practices used in RL on the solution through its relation to DP.  
Section \ref{sec:det} starts with optimal deterministic planning and develops a comparable derandomized analog of RL that clarifies the issues of whether a model is given and whether one can jump at will between arbitrary states.  Computation and convergence rates are compared over several examples.  Section \ref{sec:cost} bridges the gap between typical planning and RL formulations through rigorous mathematical analysis of their relationship, with an emphasis on using physically motivated costs in models, as opposed to arbitrary discounts and rewards.  Section \ref{sec:stoch} extends the studies to stochastic state-transition models, which are prevalent for RL.  Section \ref{sec:con} summarizes the main conclusions and remaining challenges.

\section{Optimal Planning and Deterministic RL}\label{sec:det}

\subsection{Optimal planning concepts}

Consider a standard optimal planning problem expressed as a six-tuple 
$\P = (X, U, f, x_1, X_G, \ell)$, in which $X$ is a finite {\em state space},  $U$ is a finite {\em action space}, $f: X \times U \rightarrow X$ is the {\em state transition function}, $x_1 \in X$ is the {\em initial state}, $X_G \subset X$ is the {\em goal set}, $\ell: X \times U \rightarrow [0,\infty)$ is the {\em cost function}.  Actually, $f$ is allowed to be a partial function, in which $U(x) \subseteq U$ denotes the actions available from each $x \in X$. 
A {\em feedback plan, or policy} is a mapping $\pi: X \rightarrow U$, and we use the shorthand $f_\pi(x) := f(x, \pi(x))$ to indicate the next state obtained by applying policy $\pi$ at state $x$.
A policy $\pi$ is evaluated from any $x_1 \in X$ using the stage-additive {\em cost functional}
\begin{equation}\label{eqn:cf}
L(\plan,x_1) = \sum_{k=1}^\infty \ell(x_k,\pi(x_k)),
\end{equation}
in which $k \in \Nat$ indexes over {\em stages}, 
and 
$x_{k+1} = f_\pi(x_k)$. Note that different cost functionals are possible and will be introduced later. 
To take $X_G$ into account, a special {\em termination action} $u_T \in U(x_g)$ exists from every $x_g \in X_G$, and when applied at stage $k$, then $x_{i+1} = x_i = x_g$ for all all $i \geq k$ and $l(x_g,u_T) = 0$.  This ensures that (\ref{eqn:cf}) is finite if the goal is reachable from $x_1$, in spite of having an infinite horizon, that is, infinite number of stages.  A plan or policy $\pi^*$ is {\em optimal} if for all other $\pi$ and all $x_1 \in X$, $L(\plan^*,x_1) \leq L(\plan,x_1)$.  Let $G_\pi : X \rightarrow [0,\infty)$ be the {\em cost-to-go} function for a plan $\pi$, in which each $G_\pi(x)$ is the total cost obtained in (\ref{eqn:cf}) by applying $\pi(x)$ from state $x_1 = x$.  Let $G^* = G_{\pi^*}$ be the {\em optimal cost to go}.  Determining $G^*$ and $\pi^*$ are closely related because:
\begin{equation}\label{eqn:umin}
\pi^*(x) \in \argmin_{u \in U(x)} \{ \ell(x,u) + G^*(x')\},
\end{equation}
in which $x' = f(x,u)$ (see \cite{Lav06}, Ch. 2).  In the case of classical planning, the model is known; thus, $\pi^*$ and $G^*$ can be computed using value iteration or by Dijkstra's algorithm, in which the latter can be seen an efficient version of value iteration. Whereas Dijkstra's algorithm is typically faster than value iteration, value iteration
generalizes more easily to stochastic planning and RL.  
Value iteration proceeds by successively computing cost-to-go functions for each stage, proceeding backwards through time, until the values stabilize (they do not change).  For each $x_k \in X$, the value is calculated as
\begin{equation}\label{eqn:ctgk2}
G^*_k(x_k) =
\min_{u_k \in U(x_k)} \Big\{ \ell(x_k,u_k) +
G^*_{k+1}(x_{k+1}) \Big\} ,
\end{equation}
in which $x_{k+1} = f(x_k,u_k)$.  Most RL algorithms essentially create an asynchronous, step-by-step version of (\ref{eqn:ctgk2}), in which each step processes a state-action pair $(x_k,u_k)$. 

\subsection{Handling imperfect models by derandomizing RL}
\label{sec:derand_rl}

Whereas most RL algorithms are designed considering imperfect models, planning algorithms typically assume a complete model. Therefore, our first task is to carefully relate and compare these.  Two critical options emerge: 1) Is the model given to the algorithm?  Suppose that in the {\em model-based} case, $\P$ is completely given to the algorithm.  In the {\em model-free} case, $\P$ is known `to the gods' but not the algorithm.  The robot can only try actions and then sense the next state and incurred cost.  2) Can the robot freely `jump' between states for planning purposes?  {\em Virtual jumping} occurs in planning by using the model: The algorithm deduces what would happen if action $u$ were applied from state $x$.  By default, we focus on {\em physical jumping}, in which the robot is moved to a new state.

There are three cases based on the critical options:  1) {\bf Model-based.}  If the model is available, then, virtual jumping can be used, allowing algorithms such as Dijkstra or value iteration to solve the optimal planning problem before the robot is ever moved.  2) {\bf Model-free with jumping}.  If physical jumping is free and $X$ were known, then the model can be completely discovered by trying every $u \in U(x)$ from every $x \in X$.  Suppose that even $X$ is unknown.  In this case, we assume sensing is sufficient so that states can be uniquely labeled as they are discovered, and returns to previous states are correctly determined.  Even in this extreme scenario, Dijkstra or value iteration could be applied to derive the optimal plan. The difference compared to the first case is that here, jumping must be physical. (If a cost is assigned for physical jumping, then an intriguing new optimization problem emerges.)  3) {\bf Model-free without jumping.}  This case most closely matches common RL formulations.  The robot must move along one long path to discover an optimal plan.  (The directed transition graph underlying $\P$ is required to have every state reachable from every other.)  

Note that since the system is deterministic, every state-action pair needs to be considered only once to completely discover $f$; however, revisits may be needed to fully compute $G^*$.  Furthermore, $f$ can be represented as a directed multigraph with vertices $X$ and labeled edges $U(x)$ for all $x \in X$.  This leads to a simple two-phase algorithm: 1) First, `physically' explore enough to discover the entire graph (jumping not allowed!); several efficient algorithms exist \cite{AlbHen00,DudJenMilWil93}.  2) Then, run Dijkstra's algorithm or value iteration to calculate $G^*$ and $\pi^*$.
We call the resulting algorithms {\em Model-free Dijkstra} and {\em Model-free Value Iteration}, respectively.
This, however, does not follow the spirit of most RL, which interleaves learning and optimal planning during the entire execution.  To move closer to this approach, we introduce a derandomized version of Q-learning, which serves as a gateway between typical planning and RL scenarios.

Let $Q_\pi : X \times U \rightarrow \Re$ be the {\em Q-value function}, obtained from (\ref{eqn:cf}) by applying $u_1$ at $x_1$, and then applying actions $u_i = \pi(x_i)$ for all $i > 1$.  Let $Q^* = Q_{\pi^*}$ be the {\em optimal Q-value function}.  Note that $G^*$  
can be computed from $Q^*$ by
\begin{equation}\label{eqn:qval2}
G^*(x) = \min_{u \in U(x)} \Big\{ Q^*(x,u) \Big\} .
\end{equation}
Q-learning estimates $Q^*$ through the process of stochastic iteration \cite{SutBar98,RobMon51}. Using a stochastic state transition model $p(x_{k+1}|x_k,u_k)$ (addressed in Section~\ref{sec:stoch}), the estimate $\hat{Q}^*$ of $Q^*$, is determined iteratively by
\begin{equation}\label{eqn:qvalit}
\vspace{-0.25em}
\aqval(x,u) \leftarrow (1-\rho) \aqval(x,u) + \rho \left(\ell(x,u) + \min_{u' \in
U(x')} \Big\{ \aqval(x',u') \Big\}\right) ,
\vspace{-0.25em}
\end{equation}
in which $x'$ is obtained by applying $u$ from $x$ in a single step, $\rho \in (0,1)$ is the {\em learning rate}, and $\leftarrow$ denotes an assignment (see \cite{Lav06}, Sec. 10.4).  Note that jumping is not required: The minimization over all $u' \in U(x')$ involves looking up internally stored Q-values.    The parameter $\rho$ is chosen to be smaller if the amount of uncertainty or instability is higher.  If every reachable state-action pair is visited infinitely often, then $\aqval$ converges to $Q^*$ in probability \cite{Ber19,WatDay92}.  

In the limiting case of a deterministic system, there is no uncertainty, which would suggest setting $\rho = 1$ to obtain our proposed {\em derandomized Q-learning}:
\begin{equation}\label{eqn:qvalitdet}
\vspace{-0.25em}
\aqval(x,u) \leftarrow \ell(x,u) + \min_{u' \in
U(x')} \Big\{ \aqval(x',u') \Big\} .
\vspace{-0.25em}
\end{equation}
Q-learning is considered {\em off-policy} in the sense that no particular plan of movement for exploration is assumed, as long as every state-action pair is visited infinitely often.  In practice, however, a movement plan is needed.  To ensure all state-action pairs are visited infinitely often, two {\em pure exploration} plans are considered: 1) apply random actions, leading to probabilistic convergence, and 2) apply a universal plan from \cite{TimLavLav25} to obtain deterministic convergence.  
\begin{proposition}\label{prop:cosrew}
If every state-action pair $(x,u)$ is visited infinitely often for all $x \in X$ and $u \in U(x)$, and (\ref{eqn:qvalitdet}) is applied in every step, then after a finite number of iterations, $\aqval = Q^*$.
\end{proposition}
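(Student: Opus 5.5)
We argue by induction on the number of edges in an optimal path, in the style of the convergence proof for Bellman--Ford. The argument needs some assumptions that the statement leaves implicit. We take $\aqval$ to be initialized to $+\infty$ everywhere, except $\aqval(x_g,u_T)=0$ for each $x_g\in X_G$. (Any initialization bounded below by $Q^*$ also works.) We also assume all costs $\ell(x,u)$ are strictly positive off the termination action, or at least that there are no zero-cost cycles. Then every finite $Q^*(x,u)$ is realized by an optimal path that reaches $X_G$ in finitely many steps. Define the \emph{depth} $d(x,u)$ as the minimum number of stages of an optimal path that applies $u$ at $x$, then follows $\pi^*$, and ends with $u_T$. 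Because $X$ and $U$ are finite, $d$ is bounded by some $D\le|X|$ on all pairs with finite $Q^*$.

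\textbf{Lower bound.} We show that $\aqval(x,u)\ge Q^*(x,u)$ holds at all times. It holds initially. Suppose it holds before an update (\ref{eqn:qvalitdet}) at $(x,u)$, and let $x'=f(x,u)$. Using (\ref{eqn:qval2}), the new value is
\[
\ell(x,u)+\min_{u'\in U(x')}\aqval(x',u')\;\ge\;\ell(x,u)+G^*(x')\;=\;Q^*(x,u).
\]
Here the last equality is the Bellman relation for the deterministic model, which follows from the definition of $Q_\pi$ together with (\ref{eqn:umin}). So the lower bound is preserved.

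\textbf{Upper bound.} We show by induction on $m$ that there are finite times $T_0<T_1<\dots$ such that after $T_m$, $\aqval(x,u)=Q^*(x,u)$ for every pair with $d(x,u)\le m$. For the base case, the termination pairs are exact from the start, so $T_0=0$. For the inductive step, take $(x,u)$ with $d(x,u)=m+1$ and let $x'=f(x,u)$. Its optimal successor pair $(x',\pi^*(x'))$ has depth $m$, so after $T_m$ it holds the value $Q^*(x',\pi^*(x'))=G^*(x')$. Because $(x,u)$ is visited infinitely often, it is updated at some time after $T_m$. At that update the minimum over $u'$ is at most $G^*(x')$, so the new value is at most $\ell(x,u)+G^*(x')=Q^*(x,u)$. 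Combined with the lower bound, the value equals $Q^*(x,u)$.

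\textbf{Stability.} Once a pair is exact, it stays exact. Every later update yields a value that is at least $Q^*$ by the lower bound. It is also at most $Q^*$, because the optimal successor pair has smaller depth and is already exact, and the minimum over $u'$ is bounded by its value. There are finitely many pairs, so we let $T_{m+1}$ be the largest of the times at which the depth-$(m+1)$ pairs are updated after $T_m$; this is finite. After $T_D$, every pair with finite $Q^*$ is exact.

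Pairs from which the goal is unreachable have $Q^*=\infty$, and by the lower bound they stay at $\infty$. This completes the argument.

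\textbf{Main obstacle.} The delicate point is the initialization and cost hypotheses. If $\aqval$ starts below $Q^*$ (for example, at $0$) and there is a zero-cost cycle, the values can remain stuck below $Q^*$ forever. The monotone lower bound is exactly what rules this out, so I would state these hypotheses explicitly before running the argument.
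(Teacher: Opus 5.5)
Your argument is correct under the hypotheses you state, but it takes a different route from the paper's. The paper gives no direct convergence argument. It rewrites (\ref{eqn:qvalitdet}) as a single-entry update on the augmented space $Z=X\times U$, identifies this with asynchronous value iteration for a deterministic shortest-path problem, and cites the finite-time convergence of the latter \cite{Ber83}. You instead prove that convergence from scratch in this special case, Bellman--Ford style. You use a monotone invariant $\hat{Q}^*\ge Q^*$ (from initialization above $Q^*$, with termination entries at $0$), induction on the depth of the optimal continuation, and a stability step.

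Your route buys three things. It is self-contained. Its hypotheses are visible, whereas the paper's proof leaves the positivity of cycle costs, the treatment of the termination self-loop, and the initialization inside the citation. It also gives an explicit bound: exactness after roughly $|X|$ rounds in which every pair is updated at least once. The paper's route, besides brevity, is not tied to initialization from above. This matters because the paper's experiments appear to start all non-goal entries at a common low value that then increases as actions are tried, and your lower-bound invariant does not hold there.

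Each of your two extra hypotheses is redundant given the other. If you initialize from above, zero-cost cycles are harmless: take $\pi^*$ along a shortest-path tree, so it reaches $X_G$ and the depth induction is unchanged. If instead $\ell\ge c_{\min}>0$ off $u_T$, starting below $Q^*$ is harmless. To cover that case you could add a burn-in step. Hold the termination entries at $0$ and let $q_0\le 0$ be the smallest initial value. Looking at the last update of each entry shows that after $k$ rounds $\hat{Q}^*\ge\min\{Q^*,\,q_0+k\,c_{\min}\}$ everywhere. Once $q_0+k\,c_{\min}\ge\max Q^*$ (finite in the paper's strongly connected setting), the invariant $\hat{Q}^*\ge Q^*$ holds and your argument runs unchanged from that time on.
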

\begin{proof}
Let $Z = X\times U$ 
be an augmented state space 
and let  
$f_Z:Z \times U \rightarrow Z$
be the state transition function defined as $f_Z((x,u),u')=(f(x,u),u')$, in which $f$ is the state transition function for the original problem. Rewriting \eqref{eqn:qvalitdet} in $Z$ yields 
$\aqval(z) \leftarrow \ell(z) + \min_{u' \in U(f(x,u))}\{ \aqval(z') \}$,   
in which $z'=f_Z(z,u')$. Because in Q-learning the values are updated when $x$ is visited and $u$ is applied, this can be seen as a step in an asynchronous value iteration with single state updates in contrast with value iteration which does a complete sweep of all the states. Then, the finite time convergence to the optimal values follows from the finite time convergence of asynchronous value iteration for deterministic shortest-path problems \cite{Ber83}. \qed
\end{proof}

As is common in RL, we balance exploration vs. exploitation in the form of an {\em $\epsilon$-greedy} plan/policy.  For some fixed value $\epsilon \in [0,1]$, the policy applies the next action in the pure exploration plan with probability $\epsilon$; otherwise, it applies the minimizing $u'$ in (\ref{eqn:qvalitdet}).  Thus, it is `greedy' (exploitation) $1-\epsilon$ of the time on average. Using $\epsilon$-decay was tested, but excluded; see the appendix. 

We follow another common RL practice, which is to allow periodic jumping back to the initial state.  Thus, the execution is divided into {\em episodes}, in which each one corresponds to applying an action sequence from the initial state until either $X_G$ or a maximum iteration limit is reached (to escape greediness-induced traps).  This issue does not arise in value iteration because the state becomes frozen once $X_G$ is reached and $u_T$ is applied. The relationship between these two settings is analyzed in Section \ref{sec:episode}.


\subsection{Computational comparisons} \label{sec:det_comp_comparisons}

We compared Q-learning and the model-free versions of Dijkstra's algorithm, value iteration, and asynchronous value iteration (see Sec.~\ref{sec:derand_rl}),
over several planning problems encoded as grids with standard 4-neighbors.  Higher dimensions and continuous problems are avoided here because our aim is to improve understanding and we believe the key issues identified in our studies extend to higher dimensional lattices \cite{LavBraLin04} and continuous problems involving DP with interpolation \cite{LavKon01}.
The experiments were conducted using Python\footnote{The codebase can be found \href{https://github.com/filkata123/valit_q_comparison}{here}.}. They were run on a desktop computer with a Ryzen 7 3700X CPU (3.6GHz)\footnote{A computer with an  i5-12400F CPU (2.50GHz) was used for some of the stochastic experiments ($\gamma \leq 0.9$).}.
Figure~\ref{fig:problems} shows a representative sample of the problems we investigated; many 
more results appear in the appendix.

\begin{figure}[h!]
\vspace{-1.5em}
    \centering
        \includegraphics[width=0.18\linewidth]{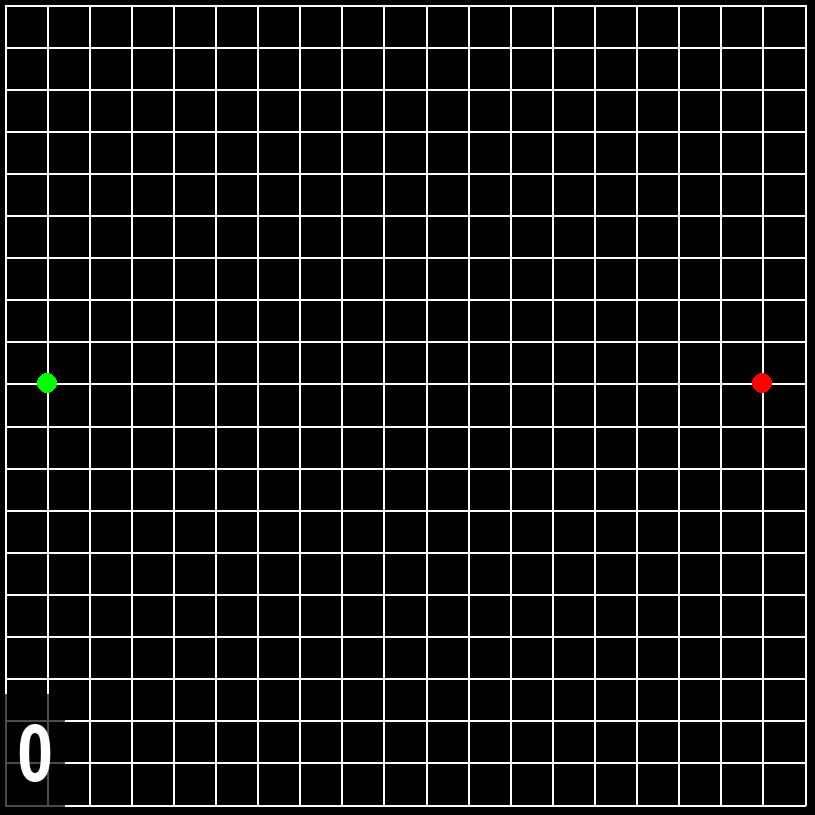}
        \includegraphics[width=0.18\linewidth]{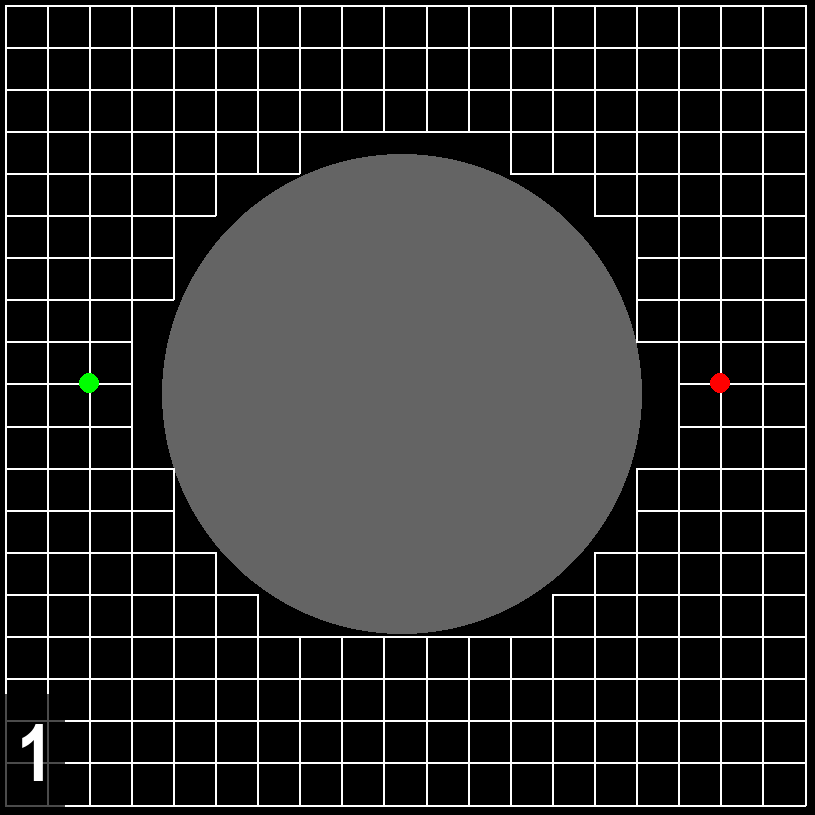}
        \includegraphics[width=0.18\linewidth]{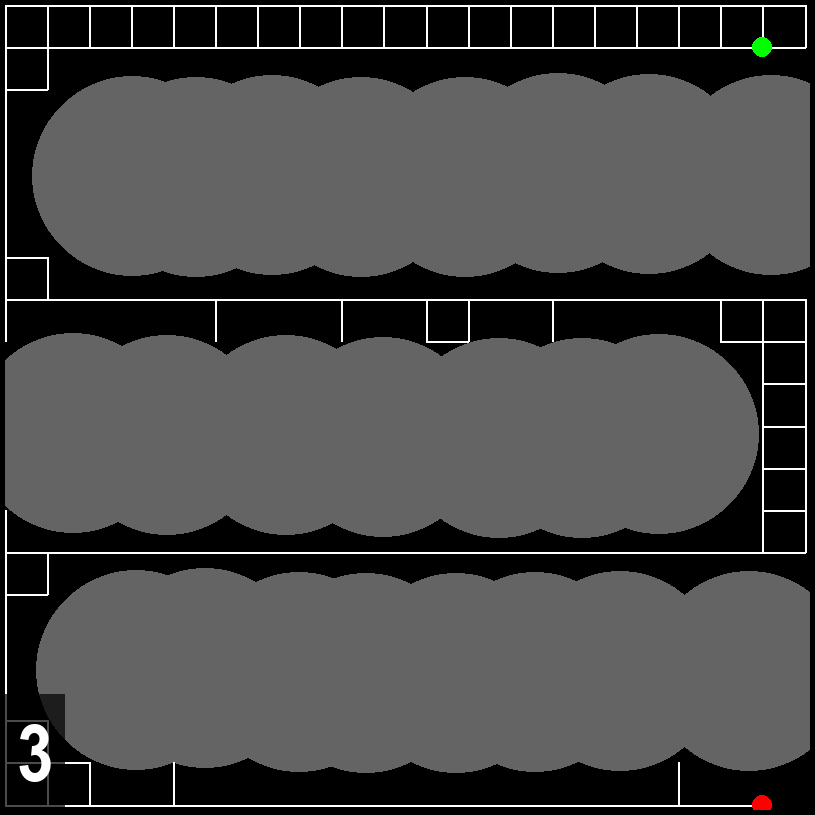}
        \includegraphics[width=0.18\linewidth]{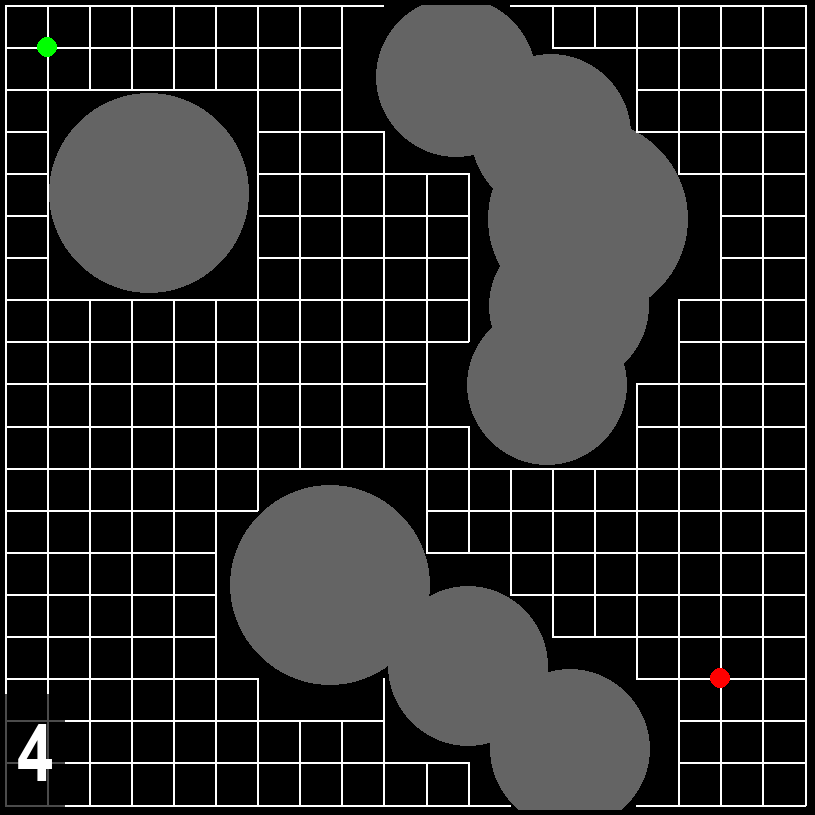}
        \includegraphics[width=0.18\linewidth]{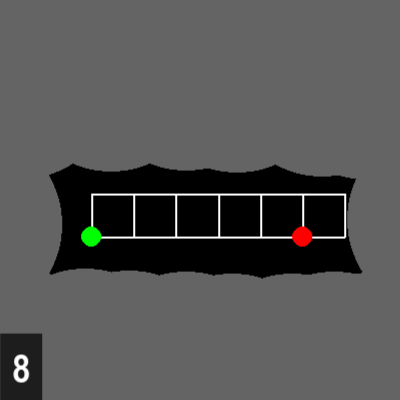}
    \vspace{0.1cm}
    
        \includegraphics[width=0.18\linewidth]{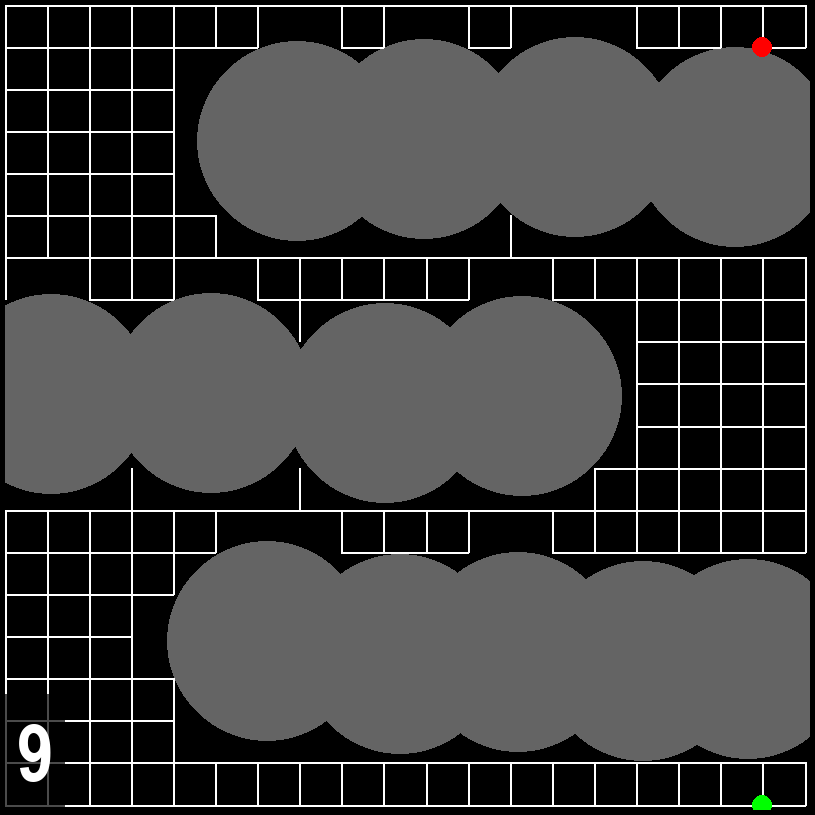}
        \includegraphics[width=0.18\linewidth]{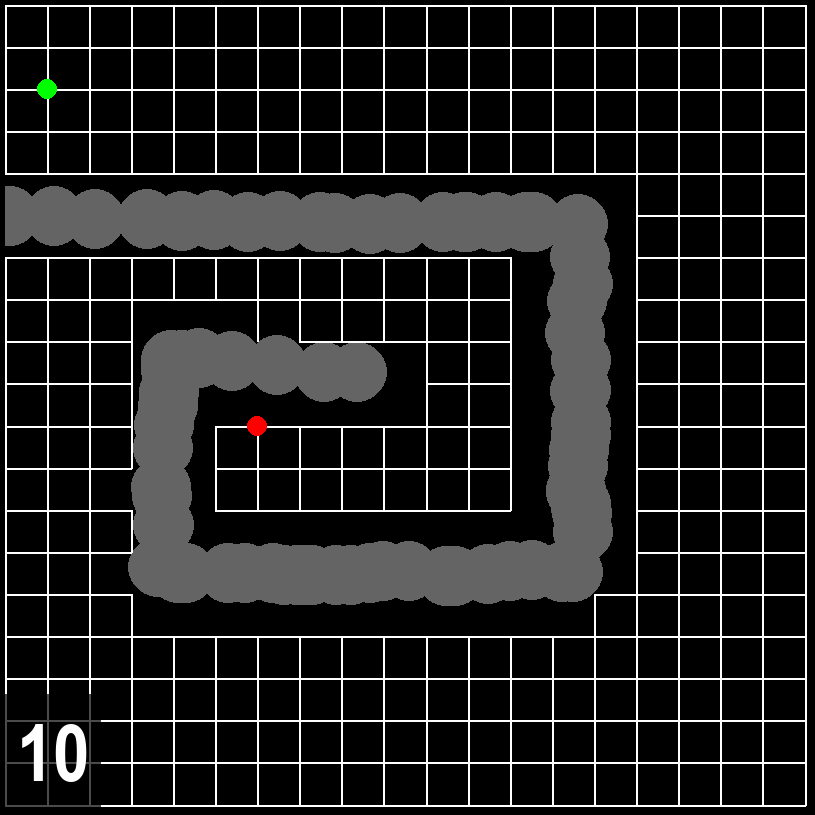}
        \includegraphics[width=0.18\linewidth]{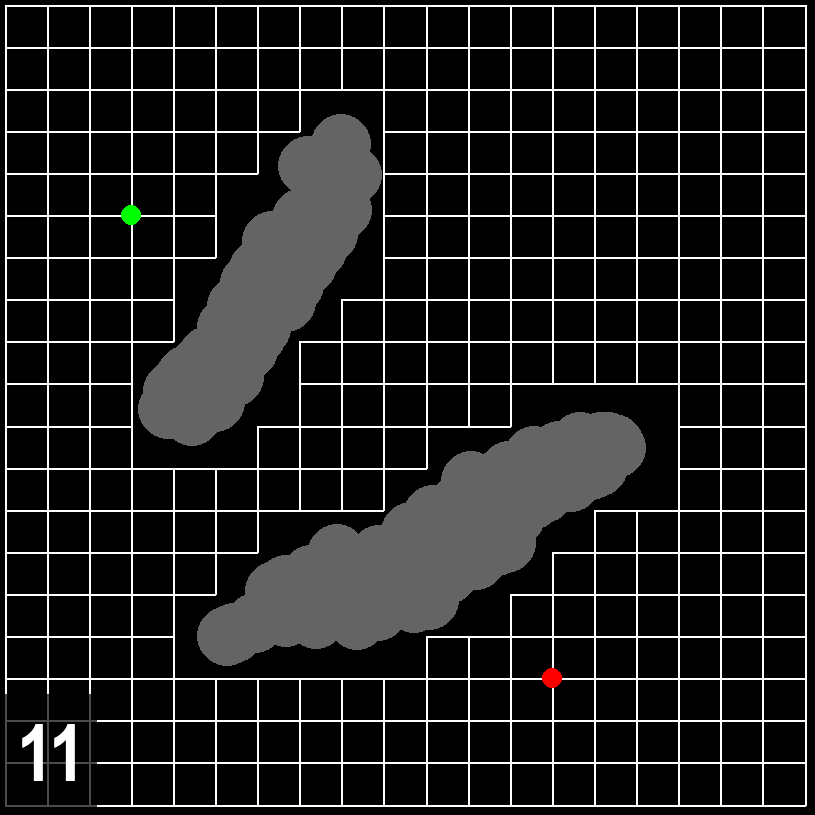}
        \includegraphics[width=0.18\linewidth]{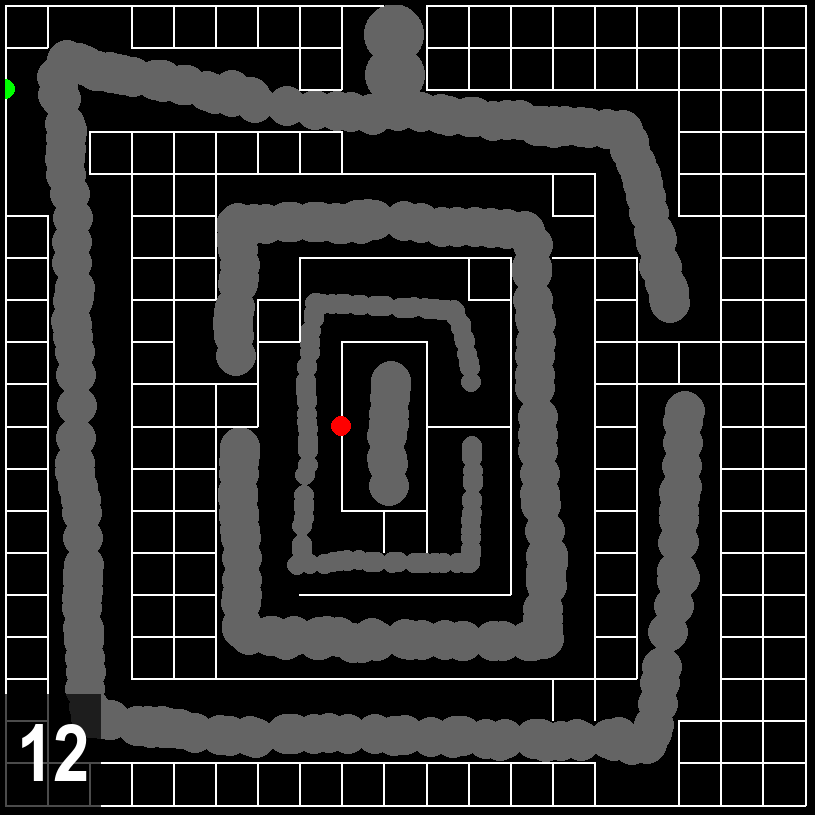}
        \includegraphics[width=0.18\linewidth]{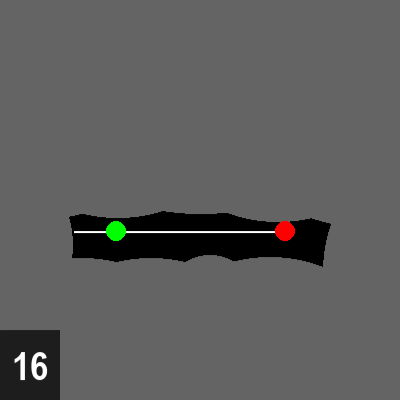}
        \vspace{-1.25em}
    \caption{Ten representative problems (enumerated from top-left to bottom-right as 0, 1, 3, 4, 8, 9, 10, 11, 12, 16). The green dot is the initial state. The red dot is the goal state. The white segments connect the states and can be viewed as the actions allowed by the robot. All actions have the same step cost. The gray areas are obstacles.}
    \label{fig:problems}
    \vspace{-1.5em}
\end{figure}

Run times are shown, which closely correlate with number of actions taken (these appear in the appendix).
We ran each algorithm $100$ times and report the mean and standard deviation.
Unless otherwise stated, the number of episodes used for \ql{} is set to $1000$, and the number of steps per episode is set to $3000$ to be sufficiently longer than any optimal path.
These were chosen through trial and error to optimize Q-learning performance.
Each run was terminated either after convergence or an iteration limit was reached.
We considered three questions: (1) Was a path from the initial state to a goal state found? (2) Was the path found optimal (is the cost-to-go value from the initial state optimal)? (3) Is the cost-to-go value for each state in the state space optimal?

The first question  
corresponds to a feasible
planning solution: An action sequence that leads to the goal.  Figure~\ref{fig:epsilon_discovery} shows how varying $\epsilon$ affects the time in which the goal state is first discovered.
When 
all state-action pairs (except the goal) are initialized to the same value,
a greedy action would be chosen arbitrarily (depending on which action is first in its enumeration) because all neighboring values are equal.
Later, as the cost of any action that has been chosen in previous iterations increases relative to the other available actions, unexplored actions would be favored.
This happens until the goal is discovered and better values are propagated backwards, causing 
actions making progress towards a lower cost solution to be preferred.
For problems in which there are less states and more obstacles (e.g., Problem 12), the goal is discovered more quickly with smaller values for $\epsilon$.
For Problem 0, since the whole state space is available and the goal is relatively far away, a purely greedy approach ($\epsilon = 0$) takes more time to reach it.
However, a random policy ($\epsilon=1$) is quicker because the robot is not hindered by any obstacles.

\begin{figure}[h!]
\vspace{-1.5em}
    \centering
        \includegraphics[width=0.49\linewidth]{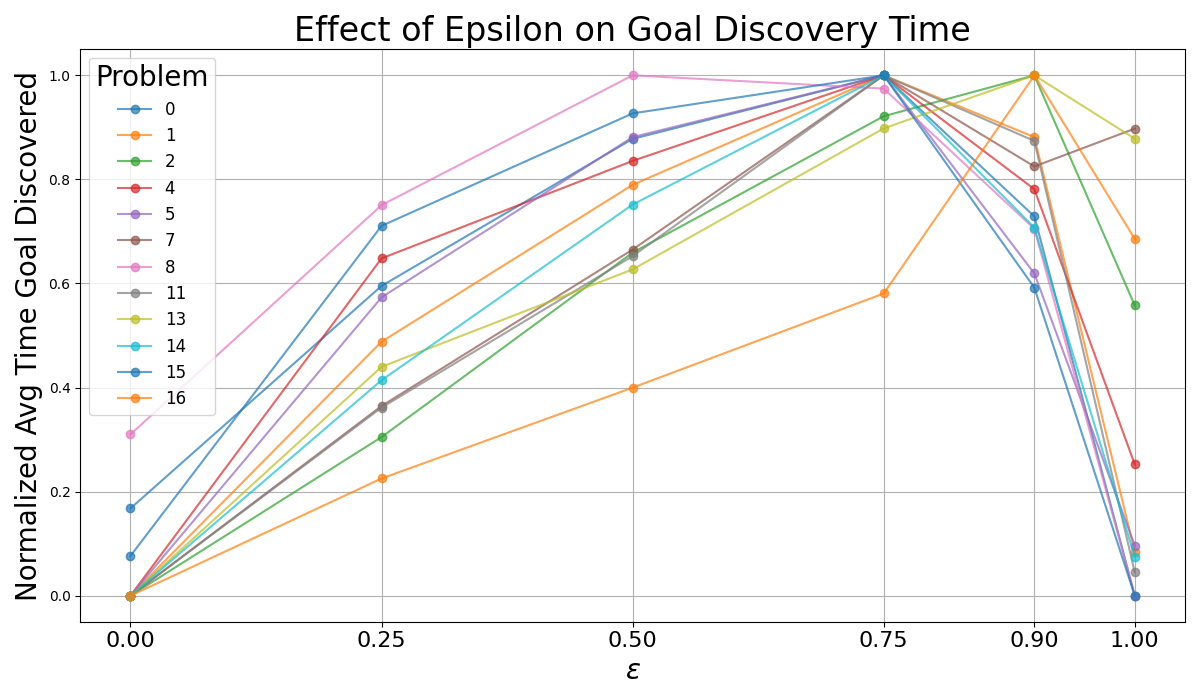}
        \includegraphics[width=0.49\linewidth]{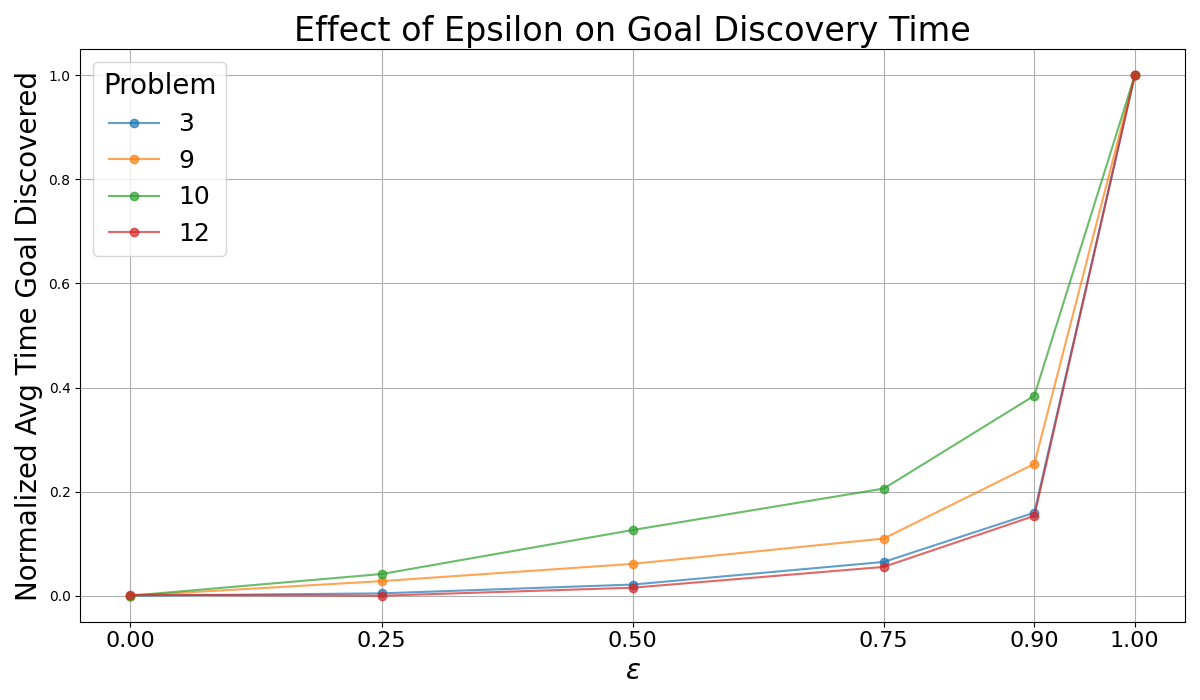}
\vspace{-1.2em}
    \caption{The effect of decreasing greediness in a policy when applied on different problems.}
    \label{fig:epsilon_discovery}
\vspace{-1.75em}  
\end{figure}

To address questions (2) and (3) above, we define convergence for \ql{} as convergence to the optimal cost-to-go values computed by a model-based value iteration.
Note that $Q^*$ can be obtained from $G^*$ using (\ref{eqn:qval2}) for this comparison.
Table~\ref{tab:prob10det} shows the performance of the model-free versions of Dijkstra's algorithm and value iteration, 
and that of \ql{} with varying $\epsilon$ for a representative problem.
We also include a deterministic version of \ql{}, labeled as `$(\uppi)$', that uses digits of $\uppi=3.14\dots$ base four, to select actions \cite{TimLavLav25}.
We show the average run time of the algorithm, how often the values for the whole state space converge (frequency of convergence), and the time it takes for the value of the initial state to converge.

\begin{table}[!h]
\caption{Performance of \ql{} and model-free approaches on a deterministic problem (Problem 10). Value iteration shortened to VI.}
\vspace{-0.5em}
\label{tab:prob10det}
\centering
\resizebox{\textwidth}{!}{
\begin{tabular}{l@{\hspace{7pt}}c@{\hspace{7pt}}c@{\hspace{7pt}}c}
\hline
 Algorithm (Problem 10) & \makecell{Run time\\(mean $\pm$ std)} & Convergence & \makecell{Optimal Initial\\Cost-to-Go Time\\(mean $\pm$ std)} \\
\hline
Q-learning ($\epsilon=0$) & 0.33383 $\pm$ 0.0025 & 0.0 \% &0.1720 $\pm$ 0.0016 \\
Q-learning ($\epsilon=0.25$) & 0.44088 $\pm$ 0.0030 & 0.0 \% &0.2043 $\pm$ 0.0041 \\
Q-learning ($\epsilon=0.5$) & 0.64394 $\pm$ 0.0045 & 0.0 \% &0.2517 $\pm$ 0.0066 \\
Q-learning ($\epsilon=0.75$) & 1.08726 $\pm$ 0.2099 & 76.0 \% &0.3502 $\pm$ 0.0129 \\
Q-learning ($\epsilon=0.9$) & 0.61889 $\pm$ 0.0559 & 100.0 \% &0.4858 $\pm$ 0.0239 \\
Q-learning ($\epsilon=1$) & 1.88356 $\pm$ 0.6740 & 100.0 \% &1.2836 $\pm$ 0.3801 \\
Q-learning\;($\pi$) ($\epsilon=1)$ & 2.02001 $\pm$ 0.4996 & 100.0 \% &1.3425 $\pm$ 0.2770 \\
Model-free Dijkstra & 0.00248 $\pm$ 0.0006 & N/A &N/A \\
Model-free Asynchronous VI & 0.04920 $\pm$ 0.0004 & N/A &N/A \\
Model-free VI & 0.05585 $\pm$ 0.0007 & N/A &N/A \\
\hline
\end{tabular}}
\vspace{-1.2em}
\end{table}

Unsurprisingly, the model-free Dijkstra algorithm has the fastest run time, followed by model-free asynchronous and then synchronous value iteration (these run times include graph exploration).
The fastest version of \ql{} is purely greedy ($\epsilon = 0$).
As a result, we can see that the more randomness we introduce in the form of increasing $\epsilon$, the slower the algorithm runs.
However, there is a reverse relationship on the convergence rate: lower $\epsilon$ decreases the frequency of convergence.  This aligns with intuition because less exploration will prevent states from being visited sufficiently often with high probability.
However, as $\epsilon$ increases, both the convergence frequency and run time increase.
Interestingly, 
for lower $\epsilon$ values,  
the optimal cost-to-go value from the initial state converges faster. Based on this observation, if the objective
is to simply find an optimal path between the initial and the goal states when the system is deterministic, picking $\epsilon = 0$ is often the best choice.

The model-free version of Dijkstra's algorithm is on average 134.65 times faster and uses 22.88 times fewer actions than \ql{} with $\epsilon = 0$.
If the objective is not only to find an optimal path from the initial state, but to find the optimal cost-to-go values for all states, we choose $\epsilon = 0.9$, since this is the value for which \ql{} converges consistently with the lowest run time.
In this case, model-free Dijkstra's algorithm is 249.62 times faster and uses 46.50 times fewer actions, which is an even bigger difference in performance. 
We observed that the relative performance among the different \ql{} variants and the model-free approaches is consistent across all problems studied, although the magnitude of the performance differences varies.

\section{Analysis of Cost/Reward Models}\label{sec:cost}
\vspace{-0.75em}
This section brings the planning-oriented model of Section \ref{sec:det} closer to the models typically used in RL, which involve rewards, discounting, and episodes.  We mathematically analyze and experimentally study the effect of each, while remaining in a deterministic setting to improve clarity; Section \ref{sec:stoch} will then consider the case of stochastic systems.

\vspace{-1em}
\subsection{Cost and reward models are equivalent, almost}\label{sec:cr}

Engineers have tended to minimize {\em cost} whereas AI researchers, especially in RL, have preferred to maximize {\em reward}.  The former usually involves physical quantities, such as time, energy, distance, or perhaps money in a business setting.  The latter takes inspiration from biology by imagining the robot or agent as being conditioned by Pavlovian rewards.  As shown below, these formulations are equivalent at a general level; however, the problem with biologically inspired reward is that it tends to lead to the heuristic invention of reward functions to `motivate' the robot to act in preconceived ways \cite{NgHarRus99}.  We therefore introduce the idea of {\em truecost}, in which the aim is to formulate costs that directly map to the physics of a given system (or perhaps money in a business setting).  Ideally, costs or rewards should not be tweaked until the algorithm does what is was supposed to do.  Of course, we can imagine {\em truereward}, but we emphasize that it should be more concrete, such as profit (revenue minus cost) in a business setting, rather than a vague bio-inspired motivator.

It is generally known that replacing the costs in a problem formulation by the corresponding rewards (multiplying the costs by ${-1}$) and switching from minimization to maximization results in an equivalent problem formulation in terms of the optimal policies. We show that this is generally true for any cost functional that is linear in terms of the sequence of stepwise costs. Note that this criterion covers all the standard cost formulations, including total cost, discounted cost, asymptotic average cost and the expected values of these in the stochastic formulations.

\begin{proposition}\label{prop:cost_reward_eq}
Let $P = (X, U, f, x_1, X_G, \ell)$ be a planning problem with a cost functional $L_\textrm{c} : U^\Nat \times X \to \Re$. Assume that the cost-functional can be written as $L_\textrm{c}(\tilde{u},x_1) = \mathcal{L}\big(h(\tilde{u},x_1)\big)$ in which $h : U^\Nat \times X \to \Re^\Nat$ maps a path onto the corresponding sequence of stepwise (immediate) costs and $\mathcal{L} : \Re^\Nat \to \Re$ is a linear operator. Then a policy $\pi$ attains a minimum for $L_\textrm{c}$ if and only if it attains a maximum for the operator $L_\textrm{r}$ defined by $L_\textrm{r}(\tilde{u},x_1) = \mathcal{L}\big(-h(\tilde{u},x_1)\big)$.
\end{proposition}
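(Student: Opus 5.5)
The argument rests on linearity of $\mathcal{L}$, which gives $\mathcal{L}(-s) = -\mathcal{L}(s)$ for every sequence $s$ in its domain. Therefore
\[
L_\textrm{r}(\tilde{u},x_1) = \mathcal{L}\big(-h(\tilde{u},x_1)\big) = -\mathcal{L}\big(h(\tilde{u},x_1)\big) = -L_\textrm{c}(\tilde{u},x_1)
\]
for every action sequence $\tilde{u}\in U^\Nat$ and initial state $x_1$. The proposition then follows from the elementary fact that negation reverses order on $\Re$: $a \le b$ if and only if $-a \ge -b$.

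\textbf{Step 1: from policies to action sequences.} A policy $\pi$ together with $x_1$ induces the sequence $\tilde{u}_\pi = (\pi(x_1),\pi(x_2),\dots)$, with $x_{k+1}=f_\pi(x_k)$. The functionals are evaluated on policies through this sequence, so $L_\textrm{c}(\pi,x_1) := L_\textrm{c}(\tilde{u}_\pi,x_1)$, and likewise for $L_\textrm{r}$. The identity above then holds for policies: $L_\textrm{r}(\pi,x_1) = -L_\textrm{c}(\pi,x_1)$ for every $\pi$ and every $x_1$.

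\textbf{Step 2: the two implications.} Take "attains a minimum" in the sense of the optimality definition in Section~\ref{sec:det}: $L_\textrm{c}(\pi,x_1)\le L_\textrm{c}(\pi',x_1)$ for all $\pi'$ and all $x_1$.
\begin{itemize}
\item If $\pi$ attains this minimum, negating each inequality and applying Step 1 gives $L_\textrm{r}(\pi,x_1)\ge L_\textrm{r}(\pi',x_1)$, so $\pi$ maximizes $L_\textrm{r}$.
\item The converse is the same computation read backwards.
\end{itemize}
The argument is pointwise in $x_1$, so it covers both the per-initial-state and the uniform-over-$x_1$ notions of optimality.

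\textbf{Main obstacle: well-definedness of $\mathcal{L}$.} This is a subtlety rather than a calculation. $\mathcal{L}$ is a real-valued linear operator, but in practice it is defined only on a subspace of $\Re^\Nat$, for example summable sequences in the total-cost case, or sequences for which the discounted sum or the Cesàro average converges. I would handle this as follows.
\begin{itemize}
\item Interpret the hypothesis as: $\mathcal{L}$ is linear on a linear subspace $V\subseteq\Re^\Nat$ that contains every $h(\tilde{u},x_1)$ under consideration.
\item Closure of $V$ under negation then guarantees that $-h(\tilde{u},x_1)\in V$, so $L_\textrm{r}$ is defined exactly where $L_\textrm{c}$ is.
\end{itemize}
Two cases need a remark.
\begin{itemize}
\item \emph{Goal-unreachable cases, where the cost is $+\infty$.} These are covered by extending to $\pm\infty$ with the convention $-(+\infty)=-\infty$. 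Negation remains an order-reversing bijection on the extended reals, so the argument is unchanged.
\item \emph{$\limsup$-type average cost.} This is not linear. The proposition's hypothesis excludes it, and I would not claim the result there.
\end{itemize}
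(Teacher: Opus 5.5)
Your proof is correct and takes the same approach as the paper. Both arguments use $\mathcal{L}(-s) = -\mathcal{L}(s)$, so that $L_\textrm{r} = -L_\textrm{c}$, and then use the fact that negation reverses the optimality inequalities pointwise in the initial state; the converse is the same argument read backwards. You state the linearity step explicitly (the paper uses it silently), compare against policies rather than arbitrary action sequences, and add sound remarks on domains and extended reals, but these do not change the argument.
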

\begin{proof}
For $x \in X$, let $\tilde{u}^* \in U^\Nat$ be the action sequence corresponding to following an $L_\textrm{c}$-optimal policy $\pi^*$ from $x$. Then $L_\textrm{c}(\tilde{u}^*, x) \leq L_\textrm{c}(\tilde{u}, x)$ for all $\tilde{u} \neq \tilde{u}^*$ so that
\[
L_\textrm{r}(\tilde{u}^*, x) = \mathcal{L}\big(-h(\tilde{u}^*, x)\big) \geq \mathcal{L}\big(-h(\tilde{u}, x)\big) = L_\textrm{r}(\tilde{u}, x)
\]
for all $\tilde{u} \neq \tilde{u}^*$. This means that applying $\tilde{u}^*$ from $x$ corresponds to maximizing the reward functional $L_\textrm{r}$, in other words, following an $L_\textrm{r}$-optimal policy. The implication in the other direction follows similarly, and since $x$ was arbitrary, the claim follows. \qed
\end{proof}

\subsection{The dangers of discounting}\label{sec:discount}
Infinite horizon decision making problems tend to yield diverging sums of costs or rewards.  If there is a goal, as in the case of planning, then the simplest way to fix it is to use a termination action as introduced in Section \ref{sec:det}: No more costs/rewards accumulate after termination, thereby forcing (\ref{eqn:cf}) to be finite if the goal is reachable.  The preferred way in RL is to use discounted cost/reward, which is a kind of mathematical hack dating back to Bellman \cite{Bel57}.  For any $\alpha \in (0,1)$, the following sum is finite:
\begin{equation}\label{eqn:disccost}
L_\alpha(\pi,x_1) = \lim_{K \to \infty}
\left( \sum_{k=1}^K \alpha^{k-1} \ell(x_k,u_k) \right).
\end{equation}
This implies that future costs/rewards are considered less valuable than current ones and it might be fine in highly unpredictable settings such as financial markets, but for robotics this causes a large and dangerous deviation from truecost. In the following, we discuss this aspect of discounting.

Let $\tilde{u} = (u_1, \ldots, u_{N-1})$ be a sequence of actions. A sequence of states $(x_1, \ldots, x_N)$ in which $x_{k+1} = f(x_k, u_k)$ for $k = 1, \ldots, N-1$ is a \emph{cycle}, if $x_1 = x_N$ and $x_k \neq x_1$ for $k = 2, \ldots, N-1$. We denote by $C(\tilde{u}, x)$ a cycle beginning at $x$ corresponding to $\tilde{u}$. If $\tilde{u}$ is given by a state-feedback policy $\pi:X\rightarrow U$, we denote a cycle starting at $x$ by $C_\pi(x)$. 
For the following proposition, suppose $\ell$ is a strictly positive cost function other than $\ell(x,u_T)$ which is $0$ for any $x$. Without loss of generality, we assume that there is a single goal state $x_G$.

\begin{proposition} \label{prop:cost_cycle} Given a planning problem $\P=(X,U,f,x_I,x_G, \ell)$, 
suppose there exists a policy $\pi_{G}$ for which $L(\pi_G,x_I)$, the cost according to the cost functional in \eqref{eqn:cf}, is finite. Furthermore, suppose there exists a $\pi_C$ such that the path $(x_1=x_I, \pi_C(x_1),x_2,\pi_C(x_2),\dots)$ contains a cycle $C_\pi(x)$, in which $x=x_m$, for some $m$. Then, there exist $\ell$ and $\alpha$ such that $L( \pi^*_\alpha, x_1) = \infty$, in which $\pi^*_\alpha$ is an optimal policy minimizing the cost functional defined in \eqref{eqn:disccost}.
\end{proposition}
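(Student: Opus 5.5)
The plan is to construct $\ell$ and $\alpha$ explicitly so that, under the discounted functional \eqref{eqn:disccost}, cycling forever along $C_{\pi_C}(x_m)$ is strictly cheaper than any policy that reaches $x_G$ and terminates. Because the termination action carries zero cost, a goal-reaching policy always has finite undiscounted cost, whereas a policy that cycles forever pays a positive cost at every stage. So if the discounted optimum cycles, then $L(\pi^*_\alpha,x_1)=\infty$. The key asymmetry I will exploit is that \eqref{eqn:disccost} tames the infinite sum of cycle costs into a finite geometric series, while a large cost on the edges leading into $x_G$ is paid only once.

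The construction has three steps. Let $N$ be the length of the cycle $C_{\pi_C}(x_m)$, and let $m$ be the stage at which it is first entered along the path generated by $\pi_C$.

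\textbf{Step 1: choose $\ell$.} Set $\ell(x,u)=1$ for every pair with $f(x,u)\neq x_G$. Set $\ell(x,u)=M$ for every non-termination action with $f(x,u)=x_G$, where $M$ is a large constant chosen below. Keep $\ell(x_G,u_T)=0$. This $\ell$ is strictly positive off the termination action, as the setting requires. If the cycle passes through $x_G$, I will first replace it by a cycle avoiding $x_G$; failing that, I will set $\ell=1$ on the cycle's own edges and put $M$ only on the remaining edges into $x_G$.

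\textbf{Step 2: bound the cycling policy.} Define the policy that follows $\pi_C$ from $x_1$ and then traverses the cycle forever. Its discounted cost is at most $\sum_{k\ge1}\alpha^{k-1}=1/(1-\alpha)$, and its undiscounted cost is $\infty$.

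\textbf{Step 3: bound every goal-reaching policy.} Any policy that reaches $x_G$ from $x_1$ does so within at most $|X|$ steps, since a deterministic policy either reaches $x_G$ without repeating a state or never reaches it. Its discounted cost is therefore at least $\alpha^{|X|-1}M$. Now fix any $\alpha\in(0,1)$, say $\alpha=1/2$, and choose
\[
M > \frac{1}{(1-\alpha)\,\alpha^{|X|-1}}.
\]
With this choice every goal-reaching policy is strictly more expensive than the cycling policy. Since $X$ and $U$ are finite, a discounted-optimal stationary policy $\pi^*_\alpha$ exists (standard for finite discounted problems). That policy cannot reach $x_G$ from $x_1$, so it never applies $u_T$. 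It therefore incurs cost at least $1$ at every stage forever, which gives $L(\pi^*_\alpha,x_1)=\infty$.

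An alternative that keeps costs uniform is to set $\ell\equiv 1$ and choose $\alpha$ small. I expect this to fail in general, because a short path to the goal followed by zero cost can beat cycling. The large goal-edge cost $M$ is what makes the argument work, and the proposition only asks for the existence of some $\ell$ and $\alpha$.

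The main obstacle is the uniform lower bound in Step 3: it must cover all policies, including those that wander or cycle before eventually reaching $x_G$. The $\alpha^{|X|-1}M$ bound relies on the observation that a stationary deterministic policy that reaches the goal does so along a simple path. A second point to handle carefully is the case where the cycle shares states with every route to $x_G$; this does not matter, because the argument only compares total discounted costs from $x_1$.
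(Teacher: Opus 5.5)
Your argument is correct. Its skeleton matches the paper's: compare the discounted cost of circulating forever on an $x_G$-avoiding cycle with that of any goal-terminating policy, and choose $(\ell,\alpha)$ so that circulating wins. Then $\pi^*_\alpha$ never applies $u_T$ and $L(\pi^*_\alpha,x_1)=\infty$.

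The difference is in how the existence of $(\ell,\alpha)$ is established. The paper writes the cycle's discounted cost as $\bar c/(1-\alpha^N)$, compares it with $L_\alpha(\pi^*_G,x_I)$, and simply asserts that $\ell$ and $\alpha$ can be picked to make the former smaller. Its illustration uses costs increasing toward the goal ($\ell(x,u)=x+1$) together with a small $\alpha$. You instead give an explicit construction: unit costs, a penalty $M>1/\bigl((1-\alpha)\alpha^{|X|-1}\bigr)$ on edges entering $x_G$, and a uniform lower bound on every goal-reaching stationary policy from the simple-path observation.

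Your route buys a fully checkable proof and a slightly stronger conclusion. For every $\alpha\in(0,1)$, not just small ones, some cost function makes the discounted optimum miss the goal. So pushing $\alpha$ toward $1$ is not a safe remedy without knowledge of $\ell$. The paper's example shows the complementary fact: a fixed, natural cost function can be spoiled by a bad $\alpha$. Your remark that $\ell\equiv 1$ cannot work is right, and in fact it holds always: a goal path of length $n$ costs $(1-\alpha^n)/(1-\alpha)<1/(1-\alpha)$. This is why some growth of cost toward the goal is unavoidable in any such construction.

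One correction: drop the fallback in Step 1. If the cycle passes through $x_G$, giving the cycle's edge into $x_G$ cost $1$ lets a goal-reaching policy use that cheap edge and then terminate. More fundamentally, with strictly positive non-termination costs, $u_T$ is the unique optimal action at $x_G$. Hence no discounted-optimal policy can loop through $x_G$, and the proposition itself fails whenever no $x_G$-avoiding cycle can be reached from $x_1$ without passing through $x_G$.

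The same issue arises if the $\pi_C$-prefix from $x_1$ to $x_m$ visits $x_G$, which would also break your Step 2 bound. You therefore need the hypothesis that the whole $\pi_C$-trajectory from $x_1$ (prefix and cycle) avoids $x_G$. The paper imposes exactly this through its ``without loss of generality'' (cycle starting at $x_I$ and not containing $x_G$).
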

\begin{proof}
Without loss of generality, suppose that the cycle given by $\pi_C$ starts at $x_I$, that is, $C_{\pi_C}(x_I)=(x_1=x_I, x_2, \dots, x_{N+1}=x_I)$ for some $N$ and suppose that $C_{\pi_C}$ does not include $x_G$. Since we are considering an infinite-horizon setting, this cycle will be executed infinitely many times. Let $\bar{c}= c_1 + \alpha c_2 + \alpha^2 c_3 + \dots \alpha^{N-1}c_{N}$ be the cost that accumulates going through this cycle once, in which $c_i=\ell(x_i,\pi_C(x_i))$, $i=1,\dots,N$. 
Let $\bar{\alpha}:=\alpha^{N}$. The discounted cost associated with $\pi_C$ is 
$
L_\alpha(\pi_C,x_I)=\lim_{K\to\infty}\sum_{k=0}^K (\bar{\alpha})^{k}\bar{c} = \frac{\bar{c}}{1-\bar{\alpha}}.
$
Let $\pi^*_G$ be a policy that minimizes \eqref{eqn:disccost} among the policies that terminate at goal. The total discounted cost for $\pi^*_G$ is $L_\alpha(\pi^*_G,x_I)=\lim_{K\to\infty}\sum_{k=1}^{K} {\alpha}^{k-1}\ell(x_k,\pi^*_G(x_k))=\sum_{k=1}^{M}\alpha^{k-1}\ell(x_k,\pi^*_G(x_k))$, since the cost of termination action $u_T$, applied at the goal state $x_k=x_G$ for $k>M-1$ for some $M$, is 0. Denote by $\pi^*_\alpha$, an optimal policy minimizing \eqref{eqn:disccost}. 
Then, $L_\alpha(\pi^*_\alpha, x_I) \leq L_\alpha(\pi_C, x_I)$ and $L_\alpha(\pi^*_\alpha, x_I) \leq L_\alpha(\pi^*_G,x_I)$. If  $L_\alpha(\pi^*_\alpha, x_I)=\frac{\bar{c}}{1-\bar{\alpha}} < \sum_{k=1}^{K-1}\alpha^{k-1}\ell(x_k,\pi^*_G(x_k))=L_\alpha(\pi^*_G,x_I)$, $\pi^*_\alpha$ must contain a cycle. Then, we can pick $\ell$ and $\alpha$ such that this inequality is satisfied. In that case, $L(\pi^*_\alpha,x_I)=\infty$ since it contains a cycle. \qed

\end{proof}

To illustrate the issue that an optimal solution to a discounted cost problem might 
miss the goal even if the goal is reachable, consider the planning problem with $X=\{0, 1, \dots, 5\}$, $x_G=5$, $U=\{-1,1\}$, $f(x,u)=\min(5, \max(0, x+u))$, and $\ell(x,u)=x+1$. The only policy that reaches the goal is $\pi_G$ for which $\pi_G(x)=1$ for any $x \in X\setminus \{5\}$ and $\pi_G(5)=u_T$. For this policy, the truecost from $x_1=0$ is $L(\pi_G,0)=15$. For the discounted formulation, 
let 
$\pi_C(x)=-1$ for any $x \in X\setminus \{5\}$ and $\pi_C(5)=u_T$. 
Then, if $\alpha$ is selected so that $1/(1-\alpha) < \sum_{i=1}^5\alpha^{i-1}(i)$, an optimal policy $\pi^*_\alpha$ for the discounted cost formulation will have a cycle and miss the goal. 

A more optimistic interpretation of this result is that a careful choice of a discount factor could prevent from such danger. However, we highlight that discounting is a heuristic that is as problematic as using potential functions for motion planning and becoming trapped in local minima \cite{BarLanLat92}. One could try to pick $\alpha$ as close to one as possible, hoping to fend off the problem of ignoring the goal. 
An alternative way to ensure that the cost remains bounded over an infinite horizon is to optimize the average cost/reward per stage, which is also traced back to Bellman \cite{Bel57}, and has been more recently suggested for RL \cite{Sch93,Mah96}.  
The relationship between this and termination actions is the basis of Section \ref{sec:episode}.

\subsection{Episodic equivalences}\label{sec:episode}

Imagine that the robot must solve the same tasks over and over, forever.  Each time it arrives in the goal, it gets magically teleported back to the initial state and must get to the goal again; this is an example of physical jumping from Section~\ref{sec:det}.  This leads to an infinite-horizon problem that fits the model of Section \ref{sec:det} but is closer to the common RL formulation that relies upon repeated {\em episodes} \cite{SutBar98}.  In this section, we establish an equivalence between the two formulations.  This is worth considering because the resulting insight extends to problems in which different goal requests may be made in each episode (see, for example, \cite{ShaLavHut96}).

Consider the following two problem formulations: 
\begin{itemize}
    \item[(i)] Let
    $P_\textrm{unsp} = (X, U, f, x_I, X_G, \ell)$
    be an \emph{unspecified-horizon} problem. We assume there exists a termination action $u_T$ for which the terminal cost is $\ell(x_G, u_T) = 0$. 
    The cost functional $L$ is given by~\eqref{eqn:cf}.
    \item[(ii)] Let
        $P_\textrm{inf} = (X, U, f, x_I, X_G, \ell)$
    be an \emph{infinite-horizon} problem. The cost functional $L_\textrm{av} : U^\Nat \to \Re$ is defined as the asymptotic average
    \begin{equation} \label{Eq_Def_asymptotic_cost}
    L_\textrm{av}(\tilde{u}, x_1) := \lim_{N \to \infty}\frac{1}{N} \sum_{k=1}^N \ell(x_k, u_k).
    \end{equation}
    Instead of setting 
    $\ell(x_G, u_T) = 0$
    as in (i), we assume that the system resets to $x_I$ whenever it reaches $x_G$ and incurs a negative cost (a bonus) $M < 0$.
\end{itemize}

\begin{figure}[t!]
\centering
\begin{minipage}{0.32\linewidth}
\includegraphics[width=0.98\linewidth]{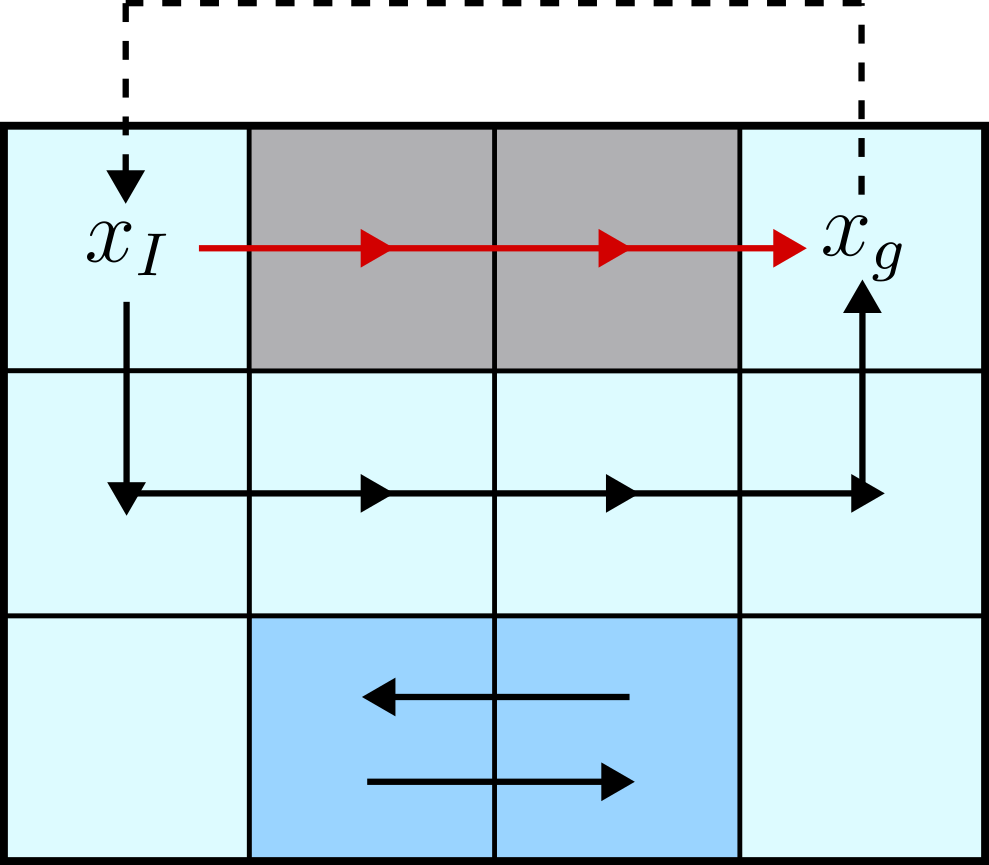}
\end{minipage}
\hfill
\begin{minipage}{0.67\linewidth}
\caption{The difference between 
a single trial (unspecified horizon) 
versus infinite-horizon formulation. The optimal path for a single trial is indicated with black arrows  
from $x_I$ to $x_G$. A more costly, but shorter, path is indicated with red arrows. In the infinite-horizon formulation, a shortcut (dashed arrow)  
takes the robot from $x_G$ and returns it to $x_I$. The dark blue squares indicate a region in state space where the robot could loop indefinitely with negligible average cost per step without ever reaching the goal.}   
\end{minipage}
\label{fig:equivalence}
\vspace{-1.2em}
\end{figure}

We characterize the planning problems for which the above definitions (i) and (ii) share an optimal policy. For $P_\textrm{unsp}$ there exists an optimal state-feedback policy $\pi_\textrm{u}^*$, which takes the robot from any initial state to the termination state with minimal cost.
In $P_\textrm{inf}$ there is no termination state, 
so any optimal policy leads the robot to enter a cycle with the lowest asymptotic average cost and to loop there indefinitely.
Due to the reset, every cycle containing $x_G$ must also contain $x_I$. It follows then from the optimality of $\pi_\textrm{u}^*$ that the lowest-cost cycle containing $x_G$ is $C_{\pi_\textrm{u}^*}(x_I)$. However, there may exist other cycles that do not contain $x_G$ and that have a lower asymptotic cost than $L_\textrm{av}(C_{\pi_\textrm{u}^*}(x_I))$. In such cases $\pi_\textrm{u}^*$ fails to be optimal for $P_\textrm{inf}$. This can be remedied by tuning the bonus parameter $M$ so that its cumulative effect overcomes the difference. 
However, changing the magnitude of $M$ may also have the effect that some cycle $C(\tilde{u}, x_I) \neq C_{\pi_\textrm{u}^*}(x_I)$ containing $x_G$ may become optimal under the asymptotic average-cost formulation due to the resetting. See Figure~\ref{fig:equivalence} for an illustration of the two types of cycles. The conclusion is that the formulations $P_\textrm{unsp}$ and $P_\textrm{inf}$ are not generally equivalent, but may be so if 
certain conditions regarding the lengths and average costs of cycles in $P_\textrm{inf}$ are satisfied. This is made precise in the following.

For an action sequence $\tilde{u}_k := (u_1, \ldots, u_{k-1})$, denote by $\Phi \big(x, \tilde{u}_k\big)$ the path $(x_1, \ldots, x_k)$ in which $x_1 = x$ and $x_{j+1} = f(x_j, u_j)$ for all $j = 1, \ldots, k-1$. Denote by $\varphi(x, \tilde{u}_k)$ the last state in $\Phi(x, \tilde{u}_k)$.
Let
$
U_G := \{ \tilde{u} \, : \, \varphi(x_I, \tilde{u}) = x_G  \}
$
be the set of action sequences that take the robot from $x_I$  to $x_G$. 
The policies for $P_\textrm{inf}$ can be divided into those that correspond to applying some $\tilde{u} \in U_G$ from $x_I$, and those that tend towards some other cycle that does not contain $x_G$. For any policy $\pi$ that corresponds to looping an action sequence $\tilde{u} \in U_G$ from the initial state $x_I$, the asymptotic average cost is given by 
\begin{equation} \label{Eq_Def_Cycle_cost}
L_\textrm{av}(\tilde{u}, x_I) 
= \frac{M + L(\tilde{u}, x_I)}{S(\tilde{u})}
= \frac{1}{S(\tilde{u})} \Bigg( M + \sum_{k=1}^{S(\tilde{u})-1} \ell \big(x_k, f(x_k, u_k) \big) \Bigg)
\end{equation}
in which $S(\tilde{u})$ denotes the length of the path $\Phi \big(x_I, \tilde{u}\big)$. 
Let $\tilde{u}^* \in U_G$ be the action sequence corresponding to applying the optimal policy for $P_\textrm{unsp}$ at $x_I$. Define for each $\tilde{u} \in U_G$ the value
\begin{equation} \label{Eq_Def_constraint_function}
\mathcal{A}(\tilde{u}) := \bigg(1 - \frac{S(\tilde{u})}{S(\tilde{u}^*)} \bigg) \bigg( \frac{S(\tilde{u}) L(\tilde{u}^*, x_I)}{S(\tilde{u}^*)L(\tilde{u}, x_I)} - 1 \bigg)
\end{equation}
and the sets
$
\mathcal{R}^- := \big\{ \tilde{u} \,:\, S(\tilde{u}) < S(\tilde{u}^*) \big\}$,
$
\mathcal{R}^+ := \big\{ \tilde{u} \,:\, S(\tilde{u}) > S(\tilde{u})^* \big\}.$
Furthermore, let $\hat{C} = C(\tilde{u}, x_1)$ be the cycle which has, among all the cycles that do not contain $x_G$, the smallest asymptotic average cost
\begin{align} \label{Eq_Def_shortest_cycle_cost}
L_\textrm{av}(\hat{C}) = L_\textrm{av}(\tilde{u}, x_1)
&= \frac{L(\tilde{u}, x_1)}{S(\tilde{u})-1}.
\end{align}

\begin{proposition}\label{prop:episodic_equiv}
Let $\pi^*$ be an optimal policy for $P_\textrm{unsp}$. Then $\pi^*$ is also an optimal policy for $P_\textrm{inf}$ with reset bonus $M<0$ if and only if $M$ satisfies
\[
a \leq M \leq \min \big\{b, S(\tilde{u}^*)L_\textrm{av}(\hat{C}) - L(\tilde{u}^*, x_I) \big\}, 
\]
in which $\tilde{u}^*$ is the action sequence corresponding to applying $\pi^*$ from $x_I$ to reach $x_G$, $\hat{C}$ is as in~\eqref{Eq_Def_shortest_cycle_cost}, $\mathcal{A}(\tilde{u})$ is as in~\eqref{Eq_Def_constraint_function}, and
\[
a := \max_{\tilde{u} \in \mathcal{R}^-} \{\mathcal{A}(\tilde{u})\}, \quad \textrm{and} \quad b := \min_{\tilde{u} \in \mathcal{R}^+} \{\mathcal{A}(\tilde{u})\}.
\]
\end{proposition}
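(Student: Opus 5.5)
The plan is to reduce optimality in $P_\textrm{inf}$ to a finite family of inequalities between asymptotic average costs, and then solve each inequality for $M$. Under the reset dynamics, any policy for $P_\textrm{inf}$ started at $x_I$ eventually settles into a periodic regime of one of two kinds. Either it repeatedly applies some $\tilde{u} \in U_G$, with average cost given by \eqref{Eq_Def_Cycle_cost}, or it tends to a cycle avoiding $x_G$, whose average cost is at least $L_\textrm{av}(\hat{C})$ by \eqref{Eq_Def_shortest_cycle_cost}. The transient prefix does not affect the limit in \eqref{Eq_Def_asymptotic_cost}. Hence $\pi^*$, which loops $\tilde{u}^*$, is optimal for $P_\textrm{inf}$ if and only if
\[
\frac{M + L(\tilde{u}^*, x_I)}{S(\tilde{u}^*)} \leq \frac{M + L(\tilde{u}, x_I)}{S(\tilde{u})} \quad \textrm{for all } \tilde{u} \in U_G,
\qquad
\frac{M + L(\tilde{u}^*, x_I)}{S(\tilde{u}^*)} \leq L_\textrm{av}(\hat{C}).
\]
The first step is to justify this reduction carefully. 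For the ``only if'' direction, I will exhibit the competing looping policy for each $\tilde{u}$, and the policy that drives the robot into $\hat{C}$. For the ``if'' direction, I will argue that every other policy has average cost at least one of the right-hand sides.

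The second inequality rearranges directly to $M \leq S(\tilde{u}^*)L_\textrm{av}(\hat{C}) - L(\tilde{u}^*, x_I)$, which is the third term in the stated upper bound.

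For the first family, write $S = S(\tilde{u})$, $S^* = S(\tilde{u}^*)$, $L = L(\tilde{u}, x_I)$ and $L^* = L(\tilde{u}^*, x_I)$. Clearing the positive denominators gives the linear condition
\[
M (S - S^*) \leq S^* L - S L^* .
\]
I then split into three cases according to the sign of $S - S^*$:
\begin{itemize}
\item If $S = S^*$ (neither $\mathcal{R}^-$ nor $\mathcal{R}^+$), the condition reduces to $L^* \leq L$. This holds automatically by optimality of $\pi^*$ in $P_\textrm{unsp}$.
\item If $\tilde{u} \in \mathcal{R}^-$, dividing by the negative quantity $S - S^*$ yields a lower bound on $M$.
\item If $\tilde{u} \in \mathcal{R}^+$, dividing by the positive quantity $S - S^*$ yields an upper bound on $M$.
\end{itemize}
Taking the maximum of the lower bounds over $\mathcal{R}^-$ gives $a$, and the minimum of the upper bounds over $\mathcal{R}^+$ gives $b$. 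The set $\mathcal{R}^-$ is finite, since $U$ is finite and lengths are bounded by $S^*$, so its maximum exists. The set $\mathcal{R}^+$ may be infinite, so I will read the minimum as an infimum, or observe that the bounds grow with $S$ once $L$ grows linearly with path length, given strictly positive costs.

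The main obstacle is the algebraic step of identifying each per-sequence threshold $(S^*L - SL^*)/(S - S^*)$ with the expression $\mathcal{A}(\tilde{u})$ in \eqref{Eq_Def_constraint_function}. A direct expansion gives
\[
\mathcal{A}(\tilde{u}) = \frac{(S^* - S)(S L^* - S^* L)}{(S^*)^2 L},
\]
which has the right sign structure but does not obviously equal the threshold. I would first try to rewrite the threshold as a product of a length-ratio factor and a cost-ratio factor. If the two quantities turn out to differ by a positive normalization, I would check whether the paper intends $M$ to be measured in those normalized units, and adapt the statement of the bounds accordingly. Everything else in the argument is bookkeeping once this identification is settled.
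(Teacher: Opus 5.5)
Your plan is the paper's proof: treat the competitor $\hat{C}$ separately, which gives $M \leq S(\tilde{u}^*)L_\textrm{av}(\hat{C}) - L(\tilde{u}^*, x_I)$. Then clear denominators in \eqref{Eq_optimality_condition} and split on the sign of $S - S^*$. The step you flagged is where the paper's proof just asserts that \eqref{Eq_optimality_condition} ``is equivalent to $M \leq \mathcal{A}(\tilde{u})$ for $\tilde{u} \in \mathcal{R}^+$ and to $M \geq \mathcal{A}(\tilde{u})$ for $\tilde{u} \in \mathcal{R}^-$''. Your algebra is right, and that equivalence is false.

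The correct threshold is your $T(\tilde{u}) = (S^*L - SL^*)/(S - S^*)$, and $\mathcal{A}(\tilde{u}) = \frac{(S^*-S)^2}{(S^*)^2 L}\, T(\tilde{u})$, so the two agree only in sign. Your fallback cannot repair this, because the factor depends on $\tilde{u}$ through $S$ and $L$. No single rescaling of $M$ turns every condition $M \leq T(\tilde{u})$ (resp.\ $\geq$) into $M \leq \mathcal{A}(\tilde{u})$ (resp.\ $\geq$).

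The statement as printed is in fact false. Take an edge $x_I \to x_G$ of cost $10$ ($S^* = 2$, $L^* = 10$, optimal for $P_\textrm{unsp}$), a path $x_I \to y \to x_G$ with two steps of cost $6$ ($S = 3$, $L = 12$), and no cycle avoiding $x_G$. Then $T = -6$ but $\mathcal{A} = -1/8$. So $M = -1$ satisfies the stated bounds, yet looping through $y$ has average cost $11/3 < 9/2$, and $\pi^*$ is not optimal for $P_\textrm{inf}$. An analogous example with $\tilde{u} \in \mathcal{R}^-$ breaks the ``only if'' direction too.

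The intended quantity is evidently $T(\tilde{u}) = L(\tilde{u}, x_I)\,(\tfrac{S L^*}{S^* L} - 1)/(1 - \tfrac{S}{S^*})$. The paper's $\mathcal{A}$ multiplies by $1 - S/S^*$ where it should divide, and drops the factor $L(\tilde{u}, x_I)$. So do not adapt your argument to $\mathcal{A}$. State the result with $a$ and $b$ defined through $T$, and your proof then goes through.

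Some smaller points:
\begin{itemize}
\item Reading $b$ as an infimum over the possibly infinite $\mathcal{R}^+$ is the right fix. Your alternative justification is wrong, though: inserting $j$ traversals of a cycle avoiding $x_G$ makes $T$ tend to $S^*$ times that cycle's average cost minus $L^*$, not to infinity. So the minimum need not be attained, but the infimum suffices because all inequalities are non-strict.
\item Your non-strict $\hat{C}$ condition matches the statement. The paper's proof writes it strictly, and with $\tilde{u}$ in place of $\tilde{u}^*$.
\item Arbitrary action sequences need not become periodic, so for the ``if'' direction use an averaging argument. Split the first $N$ stages into completed reset-cycles (each some $\tilde{u} \in U_G$), simple cycles avoiding $x_G$, and a remainder of fewer than $|X|$ steps. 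The running average is then a weighted mean of quantities each at least $L_\textrm{av}(\tilde{u}^*, x_I)$, up to an $O(1/N)$ error.
\end{itemize}
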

\begin{proof}
For $\pi^*$ to be optimal for $P_\textrm{inf}$, the asymptotic average cost $L_\textrm{av}(\tilde{u}^*, x_I) = \big(M + L(\tilde{u}^*, x_I)\big) / S(\tilde{u}^*)$ must attain the lowest value among both the action sequences $\tilde{u} \in U_G$ and $\tilde{u} \notin U_G$. The second condition holds if and only if the cost $L_\textrm{av}(\tilde{u}^*, x_I)$ is less than $L_\textrm{av}(\hat{C})$, which can be expressed as
\begin{align*}
M < S(\tilde{u}) L_\textrm{av}(\hat{C}) - L(\tilde{u}, x_I).
\end{align*}
For $\tilde{u}^*$ to yield the lowest asymptotic average cost among the action sequences $\tilde{u} \in U_G$, it needs to satisfy the inequality
\begin{equation} \label{Eq_optimality_condition}
\frac{L(\tilde{u}^*, x_I) + M}{S(\tilde{u}^*)} \leq \frac{L(\tilde{u}, x_I) + M}{S(\tilde{u})}
\end{equation}
for every $\tilde{u} \in U_G$. This is equivalent to $M \leq \mathcal{A}(\tilde{u})$ for $\tilde{u} \in \mathcal{R}^+$ and to $M \geq \mathcal{A}(\tilde{u})$ for $\tilde{u} \in \mathcal{R}^-$. Thus, for $\tilde{u}^*$ to satisfy~\eqref{Eq_optimality_condition} relative to all $\tilde{u} \in U_G$ is equivalent to $M$ satisfying the double inequality $\max_{\tilde{u} \in \mathcal{R}^-} \{\mathcal{A}(\tilde{u})\} \leq M \leq \min_{\tilde{u} \in \mathcal{R}^+} \{\mathcal{A}(\tilde{u})\}$. \qed
\end{proof}

\section{From Deterministic to Stochastic Models}\label{sec:stoch}

\subsection{Basic assumptions and approach}

The concepts from Sections \ref{sec:det} and \ref{sec:cost} extend gracefully to the stochastic setting by replacing $f$ in $\P$ with a probabilistic transition model $P(x_{k+1}|x_k,u_k)$, which denotes the probability that $x_{k+1}$ is reached if $u_k$ is applied from $x_k$.  This causes paths taken during execution to be unpredictable, resulting in $G^*$ and $Q^*$ representing {\em expected} costs.  Furthermore, every state-action pair should be visited an infinite number of times to learn $P$, rather than once to learn $f$ from the deterministic setting. Thus, the method from Section \ref{sec:det} of quickly learning $f$ first, and then applying Dijkstra's algorithm is no longer applicable.  At best, one could try to estimate the transition probabilities to some level of statistical confidence and then apply stochastic value iteration to obtain $G^*$ and $\pi^*$.  For comparative purposes, we will consider the case of stochastic value iteration applied to a perfectly learned transition model.  However, most of our studies in this section are based on Q-learning.

In our implementations, the transition model $P(x_{k+1}|x_k,u_k)$ is defined to be a direct extension of $f$ from Section \ref{sec:det}, in terms of a {\em predictability factor}, $\gamma \in (0,1]$, that controls the amount of entropy or prediction uncertainty.  From each $x_k$, consider applying $u_k$ to get $x_{k+1} = f(x_k,u_k)$.  For the stochastic version with predictability factor $\gamma$, we define $P(x'_{k+1}|x_k,u_k) = \gamma$ if $x'_{k+1} = x_{k+1}$; otherwise, the remaining probability mass $1 - \gamma$ is spread uniformly as if nature causes the remaining actions (excluding $u_k$) to be chosen from $U(x_k)$ (including a termination action that holds the state constant).  If $\gamma = 1$, then the deterministic model can essentially be simulated.  If $\gamma = 1/2$, then on average, half of the time the robot moves in the direction it was commanded, and for the other half it effectively executes an alternative action randomly.

Another critical parameter is the learning rate $\rho$, which appears in (\ref{eqn:qvalit}).  Imagine implementing a complementary filter that uses $\rho$ to interpolate between a noisy data source and the current best estimate.  If the data is perfectly reliable, as in the deterministic case of Section \ref{sec:det}, then we can set $\rho = 1$.  As the predictability decreases (lower $\gamma$), we would expect to use lower values for $\rho$ so that the noise is attenuated and the estimate becomes more stable.  Choosing a good value of $\rho$ is a delicate art.  We considered multiple values in our experiments, and furthermore developed an adaptive tuning method for $\rho$ so that it decreases during execution according to the formula
\begin{equation} \label{eq:decaying_lr}
 \rho(x,u) = \frac{1}{n(x,u)^{\omega}},
\end{equation}
in which $n(x,u)$ is the number of times that a state-action pair has been visited and $\omega$ is an arbitrary parameter (this idea was introduced in \cite{EveMan01} and used in the Double \ql{} approach \cite{Has10}).
In this approach, there is a separate $\rho(x,u)$ for each state-action pair, which is decreased as visits accumulate.

\subsection{Computational comparisons}

The experimental setup outlined in Section~\ref{sec:det_comp_comparisons} is reused with a few minor changes.
In cases where $\gamma \leq 0.9$ we use $10$ samples instead of $100$. 
We increased the number of episodes to 3000 to improve the convergence frequency, that is, the percentage of times the values converge.  We investigated performance by trading off three parameters: learning rate $\rho$, greedy parameter $\epsilon$, and predictability factor $\gamma$.  We studied convergence rates in comparison to the true optimal cost to go, obtained from stochastic value iteration, and flagged cases in which the Q-values are within $10\%$ of optimal to identify approximate successes.

For cases of high $\gamma$ (more predictable), we investigated combinations of $\epsilon, \rho \in \{0.5, 0.7, 0.9, 0.99\}$.
The $\rho$ values were chosen after careful investigation and observing that for $\rho < 0.5$ and $\rho > 0.99$, the results were not more informative.  We also found that increasing $\rho$ decreases run time.
In Table~\ref{tab:stoch_gamma_0999}, which combines the data for Problems 1 and 10, note that for $\gamma = 0.999$ there is less benefit of being greedy in comparison to the results of Section \ref{sec:det_comp_comparisons}.
Although choosing $\epsilon = 0$ still leads to a better or equal run time to $\epsilon = 1$, we see that $\epsilon=0.9$ has a much better performance than both, while also converging. This appears to be due the fact that nondeterminism throws the state away from greedy routes, thereby requiring more exploration to get values to converge.
Furthermore, we can see that DP methods converge about two orders of magnitude more quickly than RL, which reflects the price of learning on the fly.
Similar trends were observed in other problems; the appendix contains many more.  It also reports statistics for 
the goal discovery time, which are similar to the deterministic case, but grow with increasing $\rho$.

\begin{table}[!t]
\caption{Performance of \ql{} and DP methods for Problems 1 and 10 for stochastic problem with $\gamma = 0.999$. Note the addition of two columns: how often the value for the initial state converges; the shortest and longest path achieved over all samples.}
\label{tab:stoch_gamma_0999}
\resizebox{\textwidth}{!}{
\begin{tabular}{lccccc}
\hline
\makecell{Algorithm \\ ($\gamma= 0.999$)} & \makecell{Run time \\ (mean $\pm$ std)}& Convergence & \makecell{Optimal Initial \\ Cost2Go Time \\ (mean $\pm$ std)} & \makecell{Optimal Initial \\ Cost2Go \\ Convergence}& \makecell{Shortest/Longest \\ Path} \\
\hline
(Pr 1) Q-learning ($\rho=0.99, \epsilon=0$) & 0.40677 $\pm$ 0.0041 & 0.0\% & 0.0888 $\pm$ 0.0008 & 100.0\% & 28/32 \\
(Pr 1) Q-learning ($\rho=0.99, \epsilon=0.25$) & 0.52970 $\pm$ 0.0064 & 0.0\% & 0.1006 $\pm$ 0.0024 & 100.0\% & 28/29 \\
(Pr 1) Q-learning ($\rho=0.99, \epsilon=0.5$) & 0.76335 $\pm$ 0.0084 & 0.0\% & 0.1144 $\pm$ 0.0024 & 100.0\% & 28/30 \\
(Pr 1) Q-learning ($\rho=0.99, \epsilon=0.75$) & 0.52489 $\pm$ 0.1930 & 100.0\% & 0.1516 $\pm$ 0.0045 & 100.0\% & 28/30 \\
(Pr 1) Q-learning ($\rho=0.99, \epsilon=0.9$) & 0.27856 $\pm$ 0.0271 & 100.0\% & 0.2044 $\pm$ 0.0109 & 100.0\% & 28/29 \\
(Pr 1) Q-learning ($\rho=0.99, \epsilon=1$) & 0.40572 $\pm$ 0.0491 & 100.0\% & 0.3512 $\pm$ 0.0450 & 100.0\% & 28/30 \\
(Pr 1) Async Value Iteration & 0.12869 $\pm$ 0.0009 & N/A & N/A & N/A & 28/29 \\
(Pr 1) Value Iteration & 0.17870 $\pm$ 0.0024 & N/A & N/A & N/A & 28/30 \\
(Pr 10) Q-learning ($\rho=0.99, \epsilon=0$) & 0.94648 $\pm$ 0.0174 & 0.0\% & 0.2290 $\pm$ 0.0314 & 100.0\% & 64/27731 \\
(Pr 10) Q-learning ($\rho=0.99, \epsilon=0.25$) & 1.24982 $\pm$ 0.0203 & 0.0\% & 0.2513 $\pm$ 0.0056 & 100.0\% & 64/66 \\
(Pr 10) Q-learning ($\rho=0.99, \epsilon=0.5$) & 1.91614 $\pm$ 0.0188 & 0.0\% & 0.3029 $\pm$ 0.0058 & 100.0\% & 64/67 \\
(Pr 10) Q-learning ($\rho=0.99, \epsilon=0.75$) & 0.97765 $\pm$ 0.2121 & 100.0\% & 0.4122 $\pm$ 0.0107 & 100.0\% & 64/66 \\
(Pr 10) Q-learning ($\rho=0.99, \epsilon=0.9$) & 0.65742 $\pm$ 0.0572 & 100.0\% & 0.5611 $\pm$ 0.0225 & 100.0\% & 64/66 \\
(Pr 10) Q-learning ($\rho=0.99, \epsilon=1$) & 2.26297 $\pm$ 0.7142 & 100.0\% & 1.6362 $\pm$ 0.4014 & 100.0\% & 64/66 \\
(Pr 10) Async Value Iteration & 0.21926 $\pm$ 0.0162 & N/A & N/A & N/A & 64/66 \\
(Pr 10) Value Iteration & 0.35463 $\pm$ 0.0252 & N/A & N/A & N/A & 64/66 \\
\hline
\end{tabular}}
\vspace{-1em}
\end{table}

Lowering $\gamma$ to $0.99$ prevents convergence, even if we increase the episode count to $5000$ or $10000$.
In fact, further investigation reveals that the average distance between the optimal values and the values obtained with \ql{} remains nearly constant as the number of episodes increase when $\epsilon=1$. This suggests that convergence happens on a state-space level, but to values different than the optimal ones.
However, the initial cost-to-go values still converge for some parameter combinations.
Table~\ref{tab:prob1and10stoch099} shows when this happens for Problems 1 and 10.
The general observation is that a lower $\rho$ yields more reliable convergence because it is more robust to unpredictability, but requires more iterations. As $\gamma$ is lowered, then $\rho$ should be lowered accordingly, although the best combination is found only by trial and error.
It is worthy noting that a policy that leads the robot to the goal in a nearly optimal manner can be computed even if the Q-values from the initial state do not converge, as seen in the configuration with $\rho = \epsilon = 0.5$ for Problem 1. Interestingly, decreasing $\gamma$, we observe, for Problems 1 and 11, that the run time of value iteration becomes closer to that of greedy \ql{} with a small $\rho$.

\begin{table}[!h]
\caption{Performance of \ql{} and DP methods for Problems 1 and 10 for stochastic problem with $\gamma = 0.99$.}
\label{tab:prob1and10stoch099}
\resizebox{\textwidth}{!}{
\begin{tabular}{lccccc}
\hline
\makecell{Algorithm \\ ($\gamma= 0.99$)} & \makecell{Run time \\ (mean $\pm$ std)}& Convergence & \makecell{Optimal Initial \\ Cost2Go Time \\ (mean $\pm$ std)} & \makecell{Optimal Initial \\ Cost2Go \\ Convergence}& \makecell{Shortest/Longest \\ Path} \\
\hline
(Pr 1) Q-learning ($\rho=0.5, \epsilon=0$) & 0.44775 $\pm$ 0.0094 & 0.0\% & 0.1815 $\pm$ 0.0082 & 100.0\% & 28/34 \\
(Pr 1) Q-learning ($\rho=0.99, \epsilon=0$) & 0.42034 $\pm$ 0.0090 & 0.0\% & 0.2078 $\pm$ 0.0750 & 86.0\% & 28/2205 \\
(Pr 1) Q-learning ($\rho=0.5, \epsilon=0.5$) & 0.82059 $\pm$ 0.0169 & 0.0\% & 0.5143 $\pm$ 0.1522 & 32.0\% & 28/31 \\
(Pr 1) Q-learning ($\rho=0.99, \epsilon=0.5$) & 0.77585 $\pm$ 0.0109 & 0.0\% & nan $\pm$ nan & 0.0\% & 28/1474 \\
(Pr 1) Q-learning ($\rho=0.5, \epsilon=1$) & 15.46274 $\pm$ 0.1914 & 0.0\% & 5.2842 $\pm$ 3.5571 & 90.0\% & 28/33 \\
(Pr 1) Q-learning ($\rho=0.99, \epsilon=1$) & 15.39603 $\pm$ 0.1846 & 0.0\% & nan $\pm$ nan & 0.0\% & 28/32 \\
(Pr 1) Async Value Iteration & 0.15656 $\pm$ 0.0014 & N/A & N/A & N/A & 28/33 \\
(Pr 1) Value Iteration & 0.22934 $\pm$ 0.0020 & N/A & N/A & N/A & 28/31 \\
(Pr 10) Q-learning ($\rho=0.5, \epsilon=0$) & 1.09601 $\pm$ 0.0226 & 0.0\% & 0.4524 $\pm$ 0.0133 & 100.0\% & 64/322 \\
(Pr 10) Q-learning ($\rho=0.99, \epsilon=0$) & 1.07131 $\pm$ 0.0178 & 0.0\% & 0.2651 $\pm$ 0.0000 & 1.0\% & 66/2596 \\
(Pr 10) Q-learning ($\rho=0.5, \epsilon=0.5$) & 2.08346 $\pm$ 0.0157 & 0.0\% & 0.7959 $\pm$ 0.2102 & 100.0\% & 64/68 \\
(Pr 10) Q-learning ($\rho=0.99, \epsilon=0.5$) & 1.99043 $\pm$ 0.0217 & 0.0\% & 0.6288 $\pm$ 0.3471 & 99.0\% & 64/3689 \\
(Pr 10) Q-learning ($\rho=0.5, \epsilon=1$) & 30.93903 $\pm$ 0.2059 & 0.0\% & 3.5574 $\pm$ 1.1094 & 100.0\% & 64/70 \\
(Pr 10) Q-learning ($\rho=0.99, \epsilon=1$) & 30.94423 $\pm$ 0.1896 & 0.0\% & 2.4456 $\pm$ 1.0780 & 100.0\% & 64/610 \\
(Pr 10) Async Value Iteration & 0.24640 $\pm$ 0.0036 & N/A & N/A & N/A & 64/69 \\
(Pr 10) Value Iteration & 0.40702 $\pm$ 0.0028 & N/A & N/A & N/A & 64/70 \\
\hline
\end{tabular}}
\end{table}

To study global state-space convergence to optimal values, we investigated the effects of a small $\rho$ on simpler problems (i.e., Problem 8 and Problem 16).
For $\gamma = 0.9$, a random policy ($\epsilon = 1$) with $\rho = 0.1$ converges for both problems, whereas for Problem 16 even a greedy policy ($\epsilon = 0$) with the same $\rho$ value suffices.
However, this approach is not sufficient for $\gamma = 0.7$.
Instead, global convergence with $\epsilon = 1$ occurs only by decaying $\rho$ as described in (\ref{eq:decaying_lr}), using $\omega=0.7$.
As state-action visits increase, $\rho$ decays at a rate controlled by $\omega$, stabilizing the estimate.
We observe global convergence for Problem 16 with $\gamma = 0.5$ using the same approach.
We were unable to get global convergence in Problem 8 by any choice of $\omega$.

\section{Conclusions}\label{sec:con}

Our work has brought planning and reinforcement learning into closer alignment so that approaches, assumptions, and models across both can be compared and be better understood.  Through the introduction of a fully deterministic version of RL, we established its convergence and studied its performance relative to dynamic-programming based planning.  Through analysis of cost models, we have warned against using discounted cost, as is popular in RL; instead, we have advocated for truecost, in which the costs/rewards translate directly into physical or monetary values, rather than using them as a way to tune performance.  We have also established equivalences of maximizing rewards and minimizing costs, and episodic models to single-shot goal satisfaction models.  These provide motivation to use undiscounted cost models and termination actions for RL for solving goal-oriented tasks.  We applied these models in the stochastic system setting and showed how they extend naturally from the deterministic case, but inherit unique RL-oriented challenges, such as greediness and learning rate parameter tuning, which warrant further study.  We are currently extending the mathematical analysis of cost-reward and episodic equivalences to the stochastic case.

\subsubsection*{Acknowledgments} 
We thank Markku Suomalainen for helpful feedback and discussions.

\clearpage

\bibliographystyle{plain}
\bibliography{references}

\clearpage
\appendix
\section{Problems}
\begin{figure}[h!]
    \centering

        \includegraphics[width=0.18\linewidth]{Example_16_small.png}
        \includegraphics[width=0.18\linewidth]{Example_8_small.png}
        \includegraphics[width=0.18\linewidth]{Example_0.png}
        \includegraphics[width=0.18\linewidth]{Example_1.png}
        \includegraphics[width=0.18\linewidth]{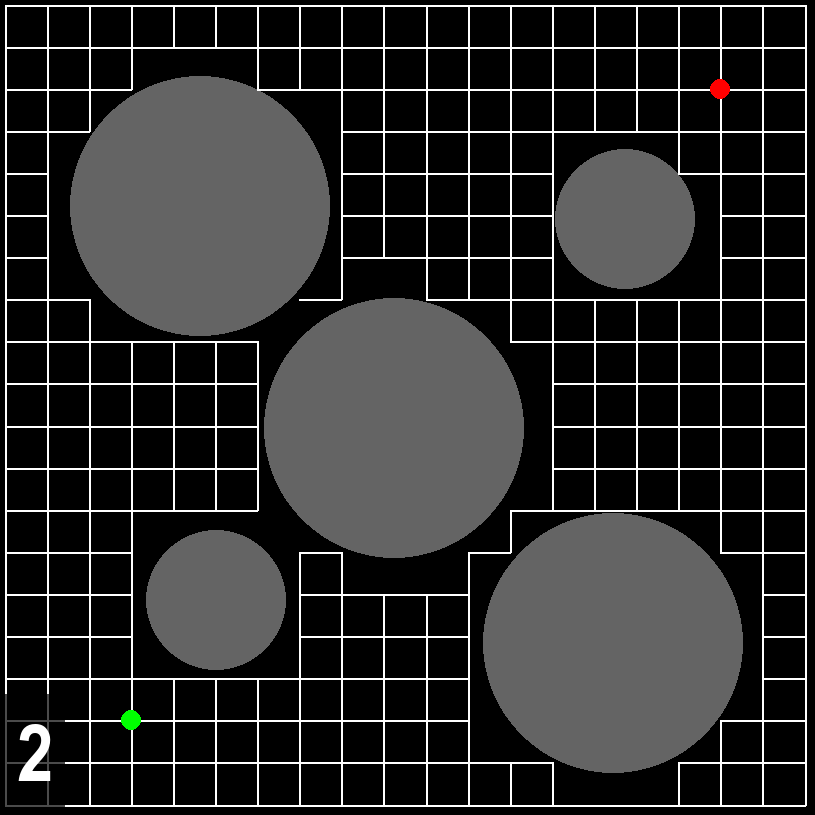}

    \vspace{0.3cm}

        \includegraphics[width=0.18\linewidth]{Example_3.png}
        \includegraphics[width=0.18\linewidth]{Example_4.png}
        \includegraphics[width=0.18\linewidth]{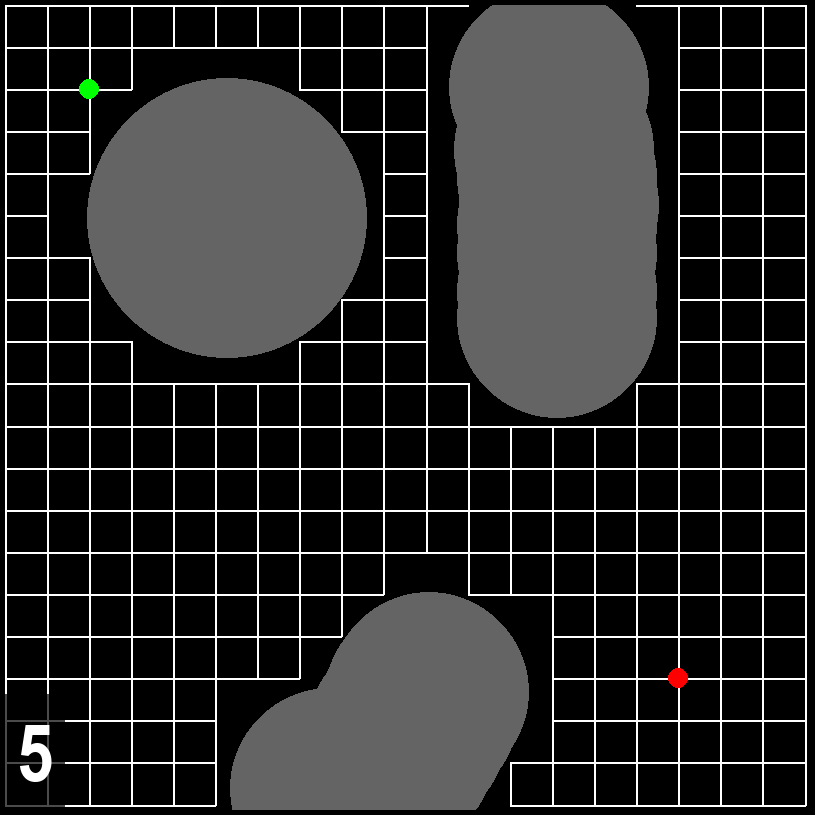}
        \includegraphics[width=0.18\linewidth]{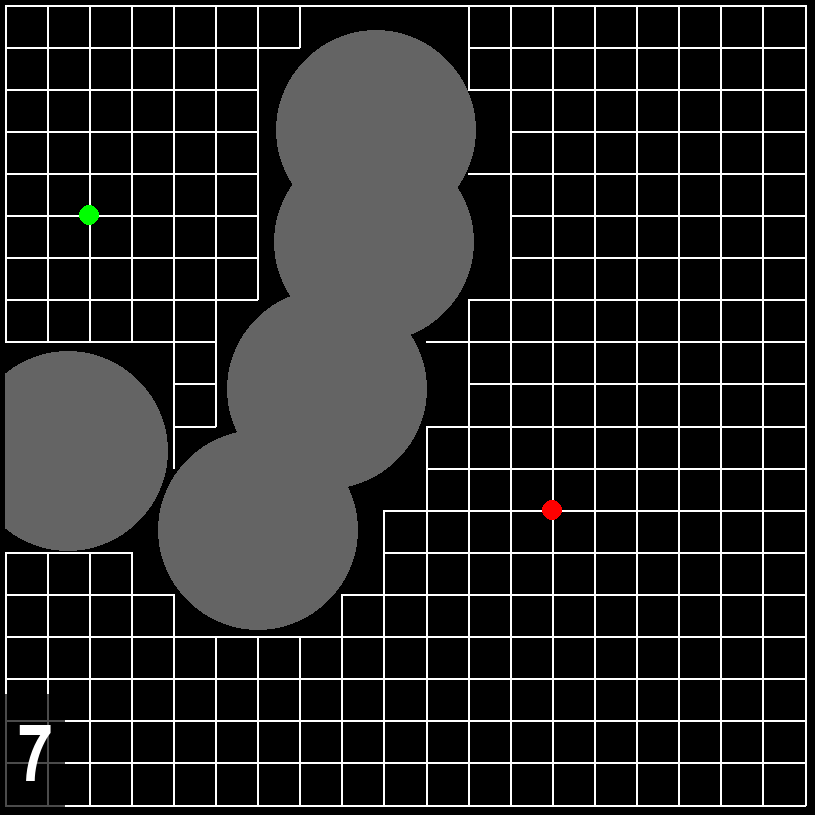}
        \includegraphics[width=0.18\linewidth]{Example_9.png}

    \caption{Ten problems (enumerated from top-left to bottom-right as 16, 8, 0, 1, 2, 3, 4, 5, 7, 9). The green dot is the initial state, while the red dot is the goal state. The white lines connect the states and can be viewed as the actions allowed by the robot. The gray circles represent obstacles that cannot be passed through}
\end{figure}

\begin{figure}[h!]
    \centering

        \includegraphics[width=0.25\linewidth]{Example_10.png}
        \includegraphics[width=0.25\linewidth]{Example_11.png}
        \includegraphics[width=0.25\linewidth]{Example_12.png}

    \vspace{0.3cm}

        \includegraphics[width=0.25\linewidth]{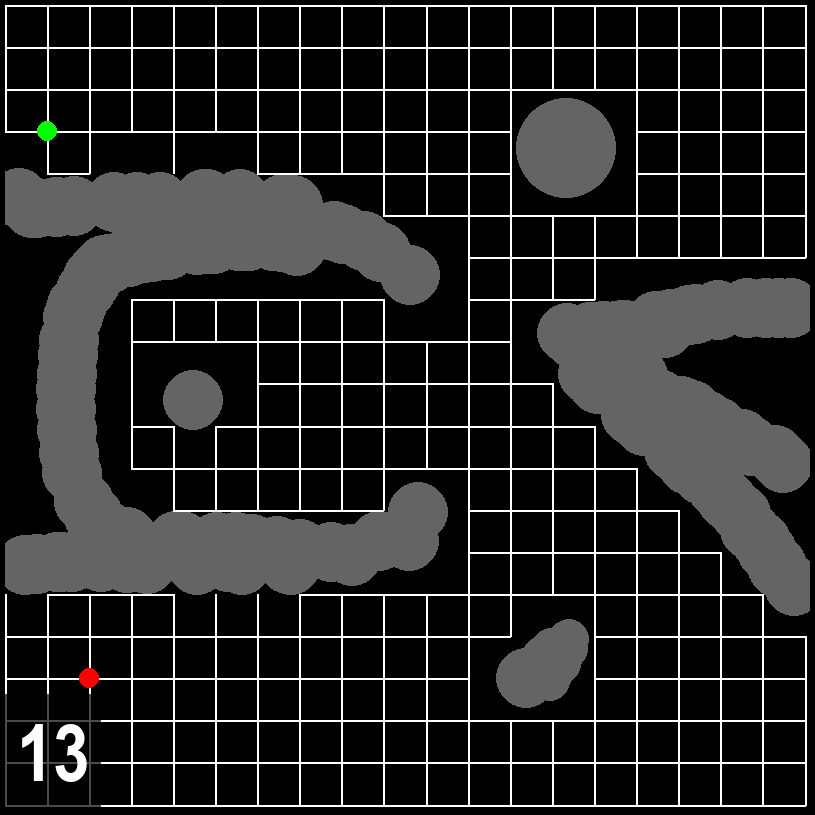}
        \includegraphics[width=0.25\linewidth]{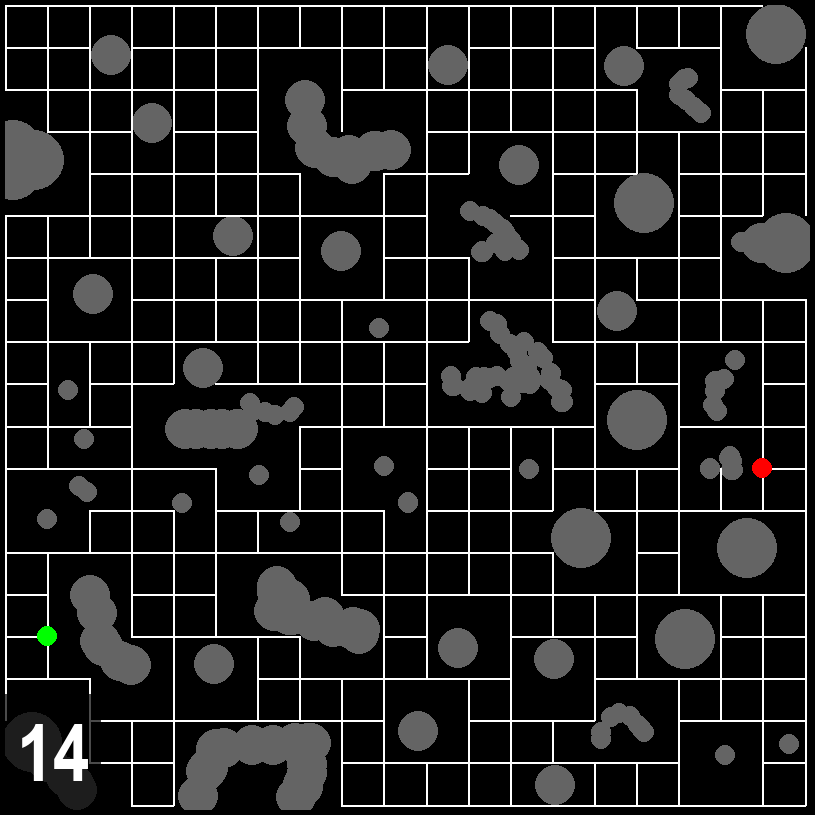}
        \includegraphics[width=0.25\linewidth]{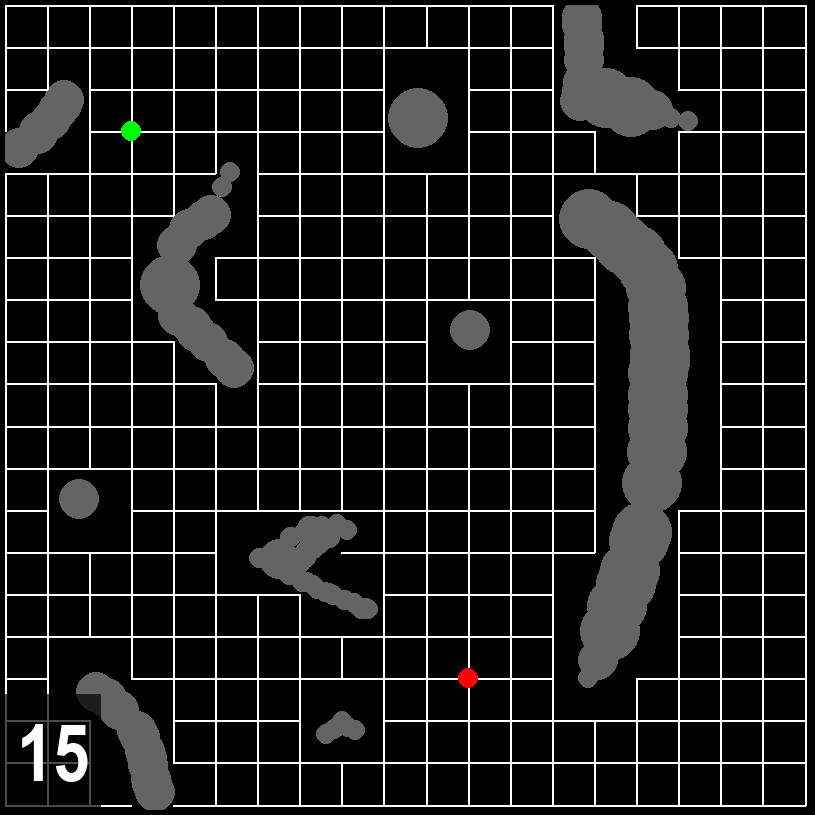}

    \caption{Additional six problems  (enumerated from top-left to bottom-right as 10 - 15).}
\end{figure}
\section{Tables for Deterministic Experiments}
The performance for \ql{} and the dynamic programming methods on deterministic problems, run on a desktop computer with a Ryzen 7 3700X CPU (3.6GHz), is shown in the following tables. We show the run times of the algorithms and the number of actions that were used during that run time. We also show how often the algorithm converges and how long it took for its exploration policy to discover the goal for the first time, together with the time it took for the initial state to achieve its optimal cost to go value, tracking the mean and standard deviation for each applicable metric.
The last three tables of this subsection show the same results for Problems 1, 10 and 11 run on a desktop computer with an i5-12400F CPU (2.50GHz), but also including a row for the performance of \ql{} when using $\epsilon$-decay.

\begin{table}[!h]
\resizebox{0.75\paperwidth}{!}{
\hspace*{-7.7cm}
}
\end{table}

\vspace*{-1cm}
\begin{table}[!h]
\label{tab:1_decay}
\caption{Algorithm performance on Problem 1, run on a desktop computer with an i5-12400F CPU (2.50GHz). Includes \ql{} with $\epsilon$-decay.}
\resizebox{0.75\paperwidth}{!}{
\hspace*{-7.7cm}%
}
\end{table}

\vspace*{-1cm}
\begin{table}[!h]
\label{tab:10_decay}
\caption{Algorithm performance on Problem 10, run on a desktop computer with an i5-12400F CPU (2.50GHz). Includes \ql{} with $\epsilon$-decay.}
\resizebox{0.75\paperwidth}{!}{
\hspace*{-7.7cm}%
}
\end{table}

\vspace*{-1cm}
\begin{table}[!h]
\label{tab:11_decay}
\caption{Algorithm performance on Problem 11, run on a desktop computer with an i5-12400F CPU (2.50GHz). Includes \ql{} with $\epsilon$-decay.}
\resizebox{0.75\paperwidth}{!}{
\hspace*{-7.7cm}%
}
\end{table}

\section{Figures for stochastic experiments}
\begin{figure}[h!]
    \centering
        \hspace*{-3cm}\includegraphics[width=1.5\linewidth]{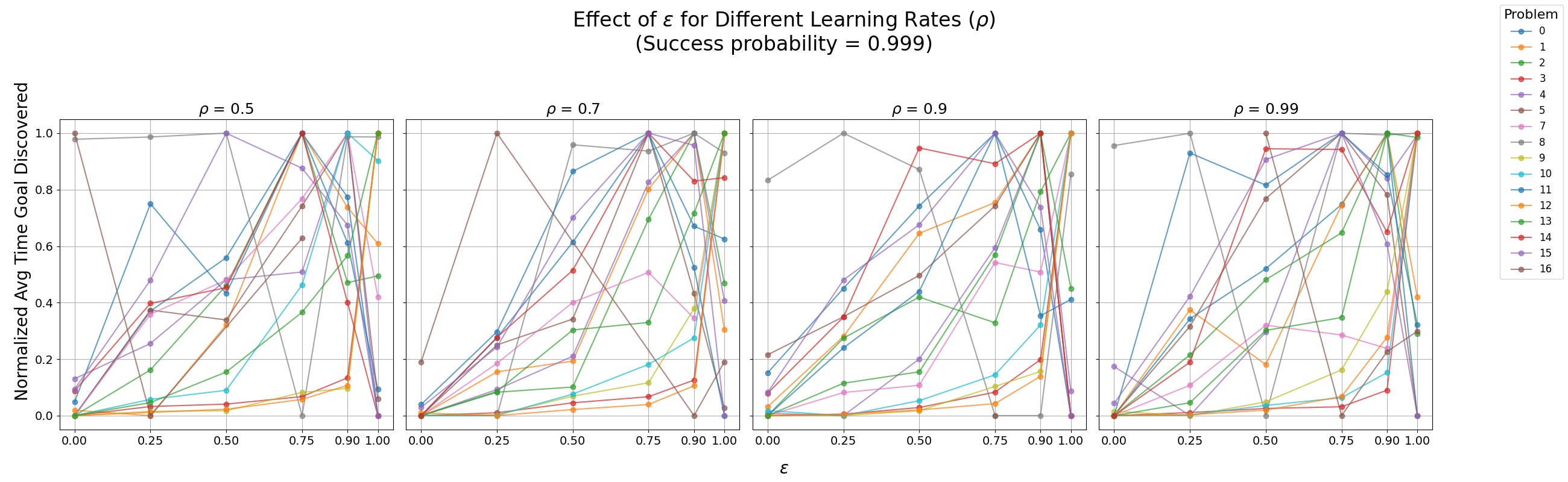}
    \caption{The effect of decreasing greediness in a policy when applied on different problems with $\gamma = 0.999$ and varying learning rate $\rho$.}
\end{figure}

\begin{figure}[h!]
    \centering
        \hspace*{-3cm}\includegraphics[width=1.5\linewidth]{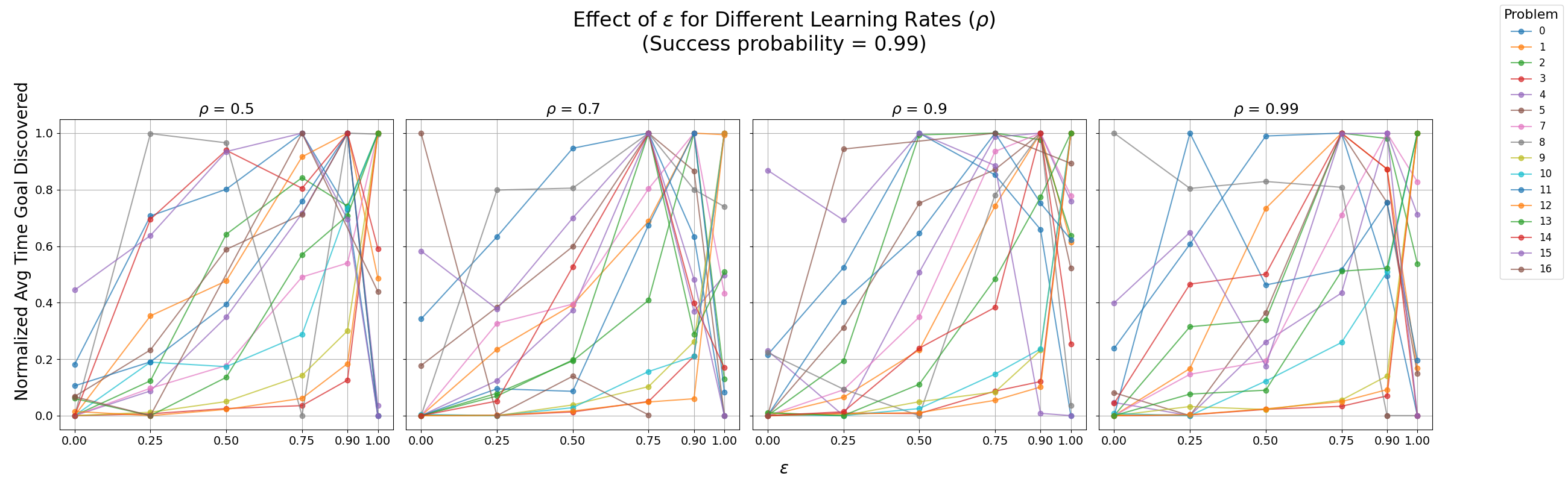}
    \caption{The effect of decreasing greediness in a policy when applied on different problems with $\gamma = 0.99$ and varying learning rate $\rho$.}
\end{figure}

\begin{figure}[h]
    \centering
    \includegraphics[width=\linewidth]{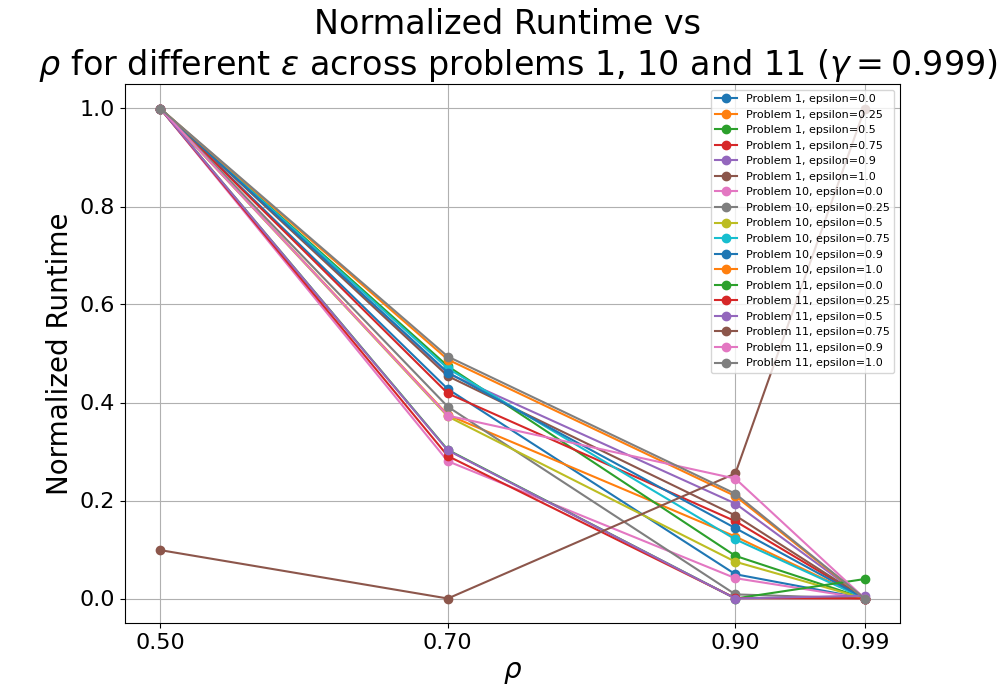}
    \caption{As $\rho$ increases, the run time for \ql{} decreases when $\gamma = 0.999$ for all $\epsilon$ with the exception of Problem 11 with $\epsilon = 0.75$.}
\end{figure}

\section{Stochastic experiments}
Bellow we have the tables for Problems 1,10 and 11 with $\gamma  \in\{ 0.99, 0.99\}$.
These algorithms were run $N = 100$ times.
The table shows similar metrics as in the deterministic case, but we also list how often the optimal cost-to-go value for the initial state is achieved, and the shortest and longest paths extracted from the computed values. 
Afterwards, the tables for Problems 1,10 and 11 with $\gamma \in\{ 0.9, 0.7, 0.6\}$ are shown.
These were run $N = 10$ times since they were considerably slower due to the stochasticity.
We also used a desktop computer with an i5-12400F CPU (2.50GHz) for this batch of experiments and the ones we describe below.
We show results using $\epsilon$-decay in Tables 22, 23 and 24.
Additional experiments with low $\gamma$ for problems 8 and 16 are also included.
Finally, the tables for the rest of the problems ran with $\gamma \in \{0.99, 0.999\}$ are shown.
\begin{table}[!h]
\caption{Performance of DP methods and \ql{} as we change the learning rate $\rho$ in stochastic problem with $\gamma = 0.99$ (Problem 1).}
\label{tab:prob1stoch0.99}
\resizebox{\textwidth}{!}{
}
\end{table}

\begin{table}[!h]
\caption{Performance of DP methods and  \ql{} as we change the learning rate $\rho$ in stochastic problem with $\gamma = 0.999$ (Problem 1).}
\label{tab:prob1stoch0.999}
\resizebox{\textwidth}{!}{
%
}
\end{table}

\begin{table}[!h]
\caption{Performance of DP methods and  \ql{} as we change the learning rate $\rho$ in stochastic problem with $\gamma = 0.99$ (Problem 10).}
\label{tab:prob10stoch0.99}
\resizebox{\textwidth}{!}{
%
}
\end{table}

\begin{table}[!h]
\caption{Performance of DP methods and  \ql{} as we change the learning rate $\rho$ in stochastic problem with $\gamma = 0.999$ (Problem 10).}
\label{tab:prob10stoch0.999}
\resizebox{\textwidth}{!}{
%
}
\end{table}

\begin{table}[!h]
\caption{Performance of DP methods and \ql{} as we change the learning rate $\rho$ across Problems 1, 10, 11 with $\gamma = 0.5$ and $\epsilon$-decay.}
\label{tab:stoch0.5_eps_decay_all_problems}
\resizebox{\textwidth}{!}{
%
}
\end{table}

\begin{table}[!h]
\caption{Performance of DP methods and \ql{} as we change the learning rate $\rho$ across Problems 1, 10, 11 with $\gamma = 0.7$ and $\epsilon$-decay.}
\label{tab:stoch0.7_eps_decay_all_problems}
\resizebox{\textwidth}{!}{
%
}
\end{table}

\begin{table}[!h]
\caption{Performance of DP methods and \ql{} as we change the learning rate $\rho$ across Problems 1, 10, 11 with $\gamma = 0.9$ and $\epsilon$-decay.}
\label{tab:stoch0.9_eps_decay_all_problems}
\resizebox{\textwidth}{!}{
%
}
\end{table}

\begin{table}[!h]
\caption{Comparing performance of \ql{} as we change the learning rate $\rho$ in stochastic problem with $\gamma = 0.99$ (Problem 0). Two versions of value iteration are also appended for contrast}
\label{tab:prob0stoch0.99}
\resizebox{\textwidth}{!}{
%
}
\end{table}

\begin{table}[!h]
\caption{Comparing performance of \ql{} as we change the learning rate $\rho$ in stochastic problem with $\gamma = 0.999$ (Problem 0). Two versions of value iteration are also appended for contrast}
\label{tab:prob0stoch0.999}
\resizebox{\textwidth}{!}{
%
}
\end{table}

\begin{table}[!h]
\caption{Comparing performance of \ql{} as we change the learning rate $\rho$ in stochastic problem with $\gamma = 0.99$ (Problem 2). Two versions of value iteration are also appended for contrast}
\label{tab:prob2stoch0.99}
\resizebox{\textwidth}{!}{
%
}
\end{table}
\end{document}